\newcommand{\ttodo}[4]{\ifthenelse{\equal{#1}{inline}}{\todo[inline, author=#2, color =
#3]{#4}}{\todo[color=#3]{#2: #4}}}
\newtheorem{theorem}{Theorem}
\newtheorem{definition}{Definition}
\newtheorem{corollary}{Corollary}
\newcommand{\wlg}{w.l.o.g.\ }
\newcommand{\wrt}{w.r.t.\ }
\newcommand{\ie}{i.e.\ }
\newcommand{\eg}{e.g.\ }
\def\define#1#2#3%
\renewcommand*{\do}[1]{%
\expandafter\providecommand\csname
#1\endcsname{#2}
\expandafter\renewcommand\csname
#1\endcsname{#2}
}
\newcommand{\TwoExpTime}{\text{\upshape{\textsc{2ExpTime}}}\xspace}
\newcommand{\poly}{\text{\upshape{/poly}}\xspace}
\newcommand{\apply}[1]{\ensuremath{\llbracket #1 \rrbracket}\xspace}
\newcommand{\dneg}{Datalog$^\neg$\xspace}
\newcommand{\dsneg}{Datalog$^{S,\neg}$\xspace}
\newcommand{\adom}{\ensuremath{\mathcal{O}}\xspace}
\newcommand{\pto}{\phantom{{}\to{}}}
\newcommand{\tr}{\triangleright}
\newcommand{\ekabcompilation}{\textsf{Cal16}\xspace}
\newcommand{\krcompilation}{\textsf{Bor21}\xspace}
\newcommand{\datalogcompilation}{\textsf{Horn}\xspace}
\newcommand{\cats}{Cats\xspace}
\newcommand{\elevator}{Elevator\xspace}
\newcommand{\robot}{Robot\xspace}
\newcommand{\taskassignment}{TaskAssign\xspace}
\newcommand{\tpsa}{TPSA\xspace}
\newcommand{\vta}{VTA\xspace}
\newcommand{\vtaroles}{VTA-Roles\xspace}
\newcommand{\drones}{Drones\xspace}
\newcommand{\queens}{Queens\xspace}
\newcommand{\robotconj}{RobotConj\xspace}
\title{Expressivity of Planning with Horn Description Logic Ontologies \\ (Technical Report)}
\author{Stefan Borgwardt,\textsuperscript{\rm 1} Jörg Hoffmann,\textsuperscript{\rm 2} Alisa Kovtunova,\textsuperscript{\rm 1} Markus Krötzsch,\textsuperscript{\rm 1} Bernhard Nebel,\textsuperscript{\rm 3} Marcel Steinmetz\textsuperscript{\rm 2}}
\begin{document}

\maketitle

% \stefan{7 pages + supplementary proofs, data, code}

\begin{abstract}
State constraints in AI Planning globally restrict the legal
environment states. Standard planning languages make closed-domain and
closed-world assumptions. Here we address open-world state constraints
formalized by planning over a description logic (DL)
ontology. Previously, this combination of DL and planning has been
investigated for the light-weight DL DL-Lite. Here we propose a novel
compilation scheme into standard PDDL with derived predicates, which
applies to more expressive DLs and is based on the rewritability of DL
queries into Datalog with stratified negation. We also provide a new
rewritability result for the DL Horn-\ALCHOIQ, which allows us to
apply our compilation scheme to quite expressive ontologies. In
contrast, we show that in the slight extension Horn-\SROIQ no such
compilation is possible unless the weak exponential hierarchy
collapses. Finally, we show that our approach can outperform previous
work on existing benchmarks for planning with DL ontologies, and is
feasible on new benchmarks taking advantage of more expressive
ontologies.
\end{abstract}

% \stefan{running example?}

\section{Introduction}
\label{sec:introduction}

% \stefan{motivation, example why we need DLs (anonymous objects) and not just Datalog; problem: anonymous objects cannot be used for planning ...}

AI planning is concerned with sequential decision making problems
where an agent needs to choose actions to achieve a goal, or to
maximize reward \cite{ghallab:etal:04}. Such
problems are compactly described in a declarative
language. Specifically, in the most basic (``classical'') version of
planning,
a planning task describes an initial state of the agent's environment,
a set of actions that can affect that environment, and a goal formula
that is to be satisfied.
In order to reach the goal, actions can be applied whenever their
preconditions are satisfied in the current state.
Here we are interested in \emph{state constraints}, constraints that
should hold globally, \ie at every state, in difference to
preconditions which merely need to hold locally.
%
% Joerg: there is a PDDL version that allows state constraints. and
% there are multiple forms of compilation. anyway, no need to make suh
% a strong statement I think.
%
%% usually need to be repeated in every action precondition in
%% order to avoid states that violate these constraints.\stefan{is this
%%   true?}
%
Moreover, standard planning formalisms (based on variants of the PDDL
language \cite{pddl-handbook,haslum:etal:19}) follow
closed-domain and closed-world assumptions, in which absent facts are
assumed to be false and no new objects can be created. In particular,
these assumptions underly state constraints as can be specified in
PDDL3 \cite{gerevini:etal:ai-09}. Here we instead target open-domain,
open-world reasoning.

One way to do this is via
\emph{explicit-input Knowledge and Action Bases (eKABs)}
\cite{CMPS-IJCAI16}, where states (sets of ground atoms) are
interpreted using open-world semantics.
All states are subject to a \emph{background ontology}, which
describes high-level concepts and global state constraints.
Together, a state and an ontology describe a multitude of possible
worlds, which leaves room for unknown information about existing and unknown objects.
For example, an ontology could express that \enquote{everyone operating a machine works for an engineering department} and \enquote{everyone who works for a department is an employee} without explicitly identifying the department of each person or even stating that they are employees.
Action preconditions contain \emph{queries} that are evaluated under open-world semantics, \eg the query for all \enquote{employees} would return all machine operators among other people.
Finally, action effects add or remove atoms in the state, \eg reassign machines or departments.
This allows for a clean separation of what is directly observed (\ie the state, which contains, \eg operational data and sensor data) from what is indirectly inferred (using the ontology).

\citeauthor{CMPS-IJCAI16} \shortcite{CMPS-IJCAI16} investigated eKABs with ontologies and queries formulated in the description logic DL-Lite, which is a popular formalism for conceptual modeling \cite{CDL+-JAR07}.
Queries in this logic enjoy \emph{first-order rewritability}, which means that the queries and the ontology can be compiled into first-order (FO) formulas that are then evaluated under closed-world semantics.
Based on this property, the authors described a compilation of DL-Lite eKABs into classical PDDL planning tasks.
Later, this compilation was further optimized to enable practical planning with DL-Lite background ontologies \cite{BHKS-KR21}.

The goal of this paper is to extend the expressivity of state constraints in eKABs from the light-weight DL-Lite to more powerful description logics (DLs) \cite{BHLS-17}.
We mainly consider \emph{Horn} DLs, which are fragments of Horn-FOL \cite{KrRH-TOCL13,JPWZ-LICS19}.
We investigate for which Horn DLs compilations into PDDL exist.
For this purpose, we adapt the notion of \emph{compilation schemes} \cite{Nebe-JAIR00,ThHN-AI05}, which relate the expressivity of two formalisms.
On the one hand, polynomial compilation schemes show that the expressivity of DL eKABs is not higher than that of PDDL.
On the other hand, although such \mbox{eKABs} could be considered
syntactic sugar, they represent powerful tools that allow domain engineers to add open-world constraints to planning tasks. % in a near-orthogonal fashion, \ie without affecting the action descriptions.

Our first contribution is a generic compilation scheme for any DL and queries that enjoy \emph{\dneg-rewritability}, which essentially allows us to compile them into a set of PDDL \emph{derived predicates}.
Using this, we can immediately employ many existing rewritability results from the DL literature for AI planning \cite{OrRS-KR10,EOS+-AAAI12,BiOr-RW15}.
We continue by describing a novel polynomial \dneg-rewriting for queries in the very expressive DL Horn-\ALCHOIQ \cite{OrRS-IJCAI11}, which allows us to extend the previous compilability result even further.
In contrast to this, we then show that such a compilation cannot exist (under a reasonable complexity-theoretic assumption) for the slightly more expressive DL Horn-\SROIQ \cite{OrRS-IJCAI11}.
For this, we follow the idea of a previous non-compilability result for PDDL with derived predicates into PDDL without derived predicates \cite{ThHN-AI05}.
This more or less draws a line between the standardized ontology languages OWL~1,\footnote{\url{http://www.w3.org/TR/owl-features/}} which results in planning tasks of equal expressivity as PDDL, and OWL~2,\footnote{\url{http://www.w3.org/TR/owl2-overview/}} where queries are strictly more expressive than PDDL.

While polynomial rewritings are nice in theory, they are often not practical due to a polynomial increase in the arity of predicates.
We therefore conclude the paper with an experimental evaluation that combines an existing practical implementation of an (exponential) \dneg-rewriting for Horn-\SHIQ \cite{EOS+-AAAI12} with our generic compilation scheme, compares this against previous approaches for DL-Lite eKABs on existing benchmarks \cite{CMPS-IJCAI16,BHKS-KR21}, and also introduces new benchmarks exploiting the newly increased expressivity.

\section{Preliminaries}
\label{sec:preliminiaries}

We first introduce description logics, \dneg, planning with derived predicates and ontologies, and compilations between these formalisms.
As usual, we use the symbol~$\models$ with two different meanings: open-world \emph{entailment} of formulas from sets of formulas, where all possible interpretations of arbitrary (even infinite) size are considered, and closed-world \emph{satisfaction} of a formula in a fixed, finite interpretation. It should always be clear from the context which one is used. %, even though finite interpretations are often described by the sets of atomic formulas they satisfy.

% \noindent
% \begin{tabular}{c@{~~}c@{~~}c}
%   \toprule
%   complexity & Horn & non-Horn \\
%   \midrule
%   \PTime & DL-Lite(atoms), $\ELH_\bot$(atoms) \\
%   \NP & DL-Lite(DNF), $\ELH_\bot$(DNF) & \\
%   \ExpTime & Horn-\SHIQ(X), Horn-\SHOIQ(X) & \ALCHQ(X), \SH/\ALCI(atoms) \\
%   (\textsc{co})\NExpTime & & \S(DNF), \SHOIQ(atoms) \\
%   \TwoExpTime & Horn-\SROIQ(X) & \SH/\ALCI(DNF) \\
%   \TwoNExpTime & & \SROIQ(atoms) \\
%   \bottomrule
% \end{tabular}

% \begin{itemize}
%   \item existing polynomial-size compilation: DL-Lite(DNF), $\ELH_\bot$(DNF), Horn-\SHOIQ(atoms)
%   \item existing exponential-size compilation: Horn-\SHIQ(DNF)
%   \item (polynomial-size compilation \emph{should exist}: Horn-\SHOIQ(DNF))
%   \item no polynomial-size compilation: Horn-\SROIQ, \ALCI(DNF), \SH(DNF), \SROIQ(atoms) \\
%   \emph{if} the following holds: \\
%   $\TwoExpTime \subseteq \ExpSpace/\text{poly} \Rightarrow \TwoExpTime = \ExpSpace$ (\cf \cite{KaLi-LEM82,ThHN-AI05})

%   $\to$ compilable into Datalog??
%   \item possibly compilable into PDDL with derived predicates(??): smaller non-Horn DLs (\ALCHQ, \S(DNF), \SH/\ALCI(atoms), \SHOIQ(atoms))
% \end{itemize}

\paragraph{Description Logics.}
Description logics are a family of KR formalisms \cite{BHLS-17} that describe open-world knowledge using \emph{axioms} in restricted first-order logic over unary and binary predicates.
Members of this family differ in their expressivity and complexity.
A \emph{TBox} (\emph{ontology}) is a finite set of \emph{DL axioms}, which can be seen as first-order sentences.
The precise syntax of these axioms depends on the specific DL that is used, but this is not important for most of the paper.
% except for Section~\ref{sec:polynomial-compilation}, where the syntax of Horn-\ALCHOIQ is introduced explicitly.
%
A \emph{state} (\emph{ABox}) is a finite set of \emph{ground atoms} $p(\vec{c})$, where $p$ is a predicate and $\vec{c}$ is a sequence of \emph{objects} (\emph{constants}). %whose length corresponds to the arity of~$p$.
Since we use standard planning formalisms, we also allow predicates of arity higher than~$2$ in states, but those cannot occur in TBoxes, so essentially have a closed-world semantics.
For a state~$s$, $\adom(s)$ is the set of all objects occurring in~$s$.
Two special unary predicates are~$\top$ and~$\bot$, which always evaluate to \emph{true} and \emph{false}, respectively.

\paragraph{Queries.}
A \emph{conjunctive query (CQ)} is a formula of the form $q(\vec{x})=\exists\vec{y}.\phi(\vec{x},\vec{y})$, where $\phi$ is a conjunction of unary and binary atoms.
A \emph{union of CQs (UCQ)} is a disjunction of CQs.
An \emph{instance query (IQ)} is a CQ of the form~$p(x)$, where $p$ is unary.
The central reasoning problem is to decide whether $s,\Tmc,\theta\models q$, where $s$ is a state, \Tmc a TBox, and $\theta$ an assignment of objects from~$\adom(s)$ to the free variables in the (U)CQ~$q$.
In an abuse of notation, we may denote with~$\bot$ the CQ~$\exists x.\bot(x)$.
\emph{ECQs} were introduced to combine open-world and closed-world reasoning \cite{CDL+-IJCAI07}. We further extend ECQs by \enquote{closed-world atoms} that can also be of higher arity. ECQs are defined by the grammar
$Q ::= p(\vec{x}) \mid [q] \mid \lnot Q \mid Q\land Q \mid \exists y.Q$,
where~$p$ is a predicate, $\vec{x}$ are terms, $q$ is a UCQ, and $y$ is a variable.
%
% An ECQ is \emph{Boolean} if it has no free variables.
%
% An \emph{extended instance query (EIQ)} is an ECQ where all UCQs are IQs.
%
The semantics of ECQs is defined as follows:
% Let \Tmc be a TBox, $A$ be an ABox, and $\theta$ be an instantiation of the free variables of an
% ECQ~$Q$ with objects from $\adom(A)$.
% Depending on the shape of~$Q$, we define
%
\[\begin{array}{@{}l@{\ \ }l@{\ \ }l@{}}
  s,\Tmc,\theta \models p(\vec{x}) & \text{iff} & s\models p(\theta(\vec{x})) \\
  s,\Tmc,\theta \models [q] & \text{iff} & s,\Tmc,\theta\models q \\
  s,\Tmc,\theta \models \lnot Q_1 & \text{iff} & s,\Tmc,\theta \not\models Q_1 \\
  s,\Tmc,\theta \models Q_1\land Q_2 & \text{iff} & s,\Tmc,\theta \models Q_1 \text{ and } s,\Tmc,\theta \models Q_2 \\
  s,\Tmc,\theta \models \exists y.Q_1 & \text{iff} & \exists o\in\adom(s) : s,\Tmc,\theta[y\mapsto o] \models Q_1
\end{array}\]

There is a difference between the ECQs $B(x)$, which is answered directly in the (closed-world) model described by~$s$, and $[B(x)]$, which is evaluated \wrt the TBox as well. For example, if $s=
\{C(a)\}$ and $\Tmc=\{C\sqsubseteq B\}$, then $B(x)$ is not satisfied by any instantiations, but $[B(x)]$ is.

\paragraph{\dneg.}
A \emph{\dneg rule} is a formula $p(\vec{x})\gets\Phi(\vec{x},\vec{y})$ whose \emph{body}~$\Phi(\vec{x},\vec{y})$ is a conjunction of literals and whose \emph{head}~$p(\vec{x})$ is an atom.
A set of \dneg rules~\Rmc is \emph{stratified} if the set of its predicates can be partitioned into $\Pmc_1,\dots,\Pmc_n$ such that, for all $p_i\in\Pmc_i$ and $p_i(\vec{x})\gets\Phi(\vec{x},\vec{y})\in\Rmc$,
\begin{itemize}
  \item if $p_j\in\Pmc_j$ occurs in $\Phi(\vec{x},\vec{y})$, then $j\le i$, and
  \item if $p_j\in\Pmc_j$ occurs negated in $\Phi(\vec{x},\vec{y})$, then $j<i$.
\end{itemize}
In the following, all sets of \dneg rules are stratified. %even if it is not mentioned explicitly.
Datalog is the restriction of \dneg to positive rule bodies.

All variables in \dneg rules are implicitly universally quantified.
Given a state~$s$ and a set of \dneg rules~\Rmc, we denote by $\Rmc(s)$ the minimal Herbrand
model of $s\cup\Rmc$. %; it holds that $\adom(\Rmc(s))=\adom(s)\cup\adom(\Rmc)$.

\begin{definition}
  \label{def:datalog-rewriting}
  A TBox~\Tmc and a UCQ~$q(\vec{x})$ are \emph{\dneg-rewritable} if there is a set of \dneg
  rules~$\Rmc_{\Tmc,q}$ with a predicate~$P_q$ such that, for all states~$s$ and
  substitutions~$\theta$ of $\vec{x}$ in $\adom(s)$, we have $s,\Tmc,\theta\models q(\vec{x})$
  iff $\Rmc_{\Tmc,q}(s)\models P_q(\theta(\vec{x}))$.
  
  % Consistency for~\Tmc is \emph{\dneg-rewritable} if there exists a set of \dneg rules
  % $\Rmc_{\Tmc,\bot}$ such that, for all states~$s$, it holds
  % that $s\cup\Tmc$ is consistent iff $\Rmc_{\Tmc,\bot}(s)\not\models\exists x.\bot(x)$.
\end{definition}
A \dneg-rewriting may use additional predicates and constants, but only needs to be correct for the original symbols.
We talk about \emph{Datalog-rewritability} if the set $\Rmc_{\Tmc,q}$ does not contain negation.
A variety of such rewritability results for DLs exist, for example
a very complex (but polynomial-size) Datalog-rewriting for IQs over Horn-\ALCHOIQ \cite{OrRS-KR10},
a polynomial-size Datalog-rewriting for IQs in $\EL^{++}$ \cite{Kroe-IJCAI11}, or
an exponential-size Datalog-rewriting for UCQs over Horn-\SHIQ TBoxes implemented in the \Clipper system\footnote{\url{https://github.com/ghxiao/clipper}} \cite{EOS+-AAAI12},
which is polynomial for the sublogic $\ELH_\bot$ \cite{BiOr-RW15}.

\dneg-rewritability naturally extends to ECQs~$Q$: take the disjoint union~$\Rmc_{\Tmc,Q}$
of all~$\Rmc_{\Tmc,q}$ for UCQs~$q$ occurring in~$Q$ and construct an FO
formula~$Q_\Tmc$ by replacing each UCQ atom~$[q(\vec{x})]$ in~$Q$ with~$P_q(\vec{x})$.
Then $s,\Tmc,\theta\models Q(\vec{x})$ is equivalent to $\Rmc_{\Tmc,Q}(s)\models Q_\Tmc(\theta(\vec{x}))$ \cite{CDL+-IJCAI07}.

% \stefan{discuss UNA in planning vs. DLs!}

\paragraph{PDDL with Derived Predicates.}
We recall PDDL 2.1 extended with derived predicates
\cite{fox:long:jair-03,hoffmann:edelkamp:jair-05}.
In this context, states are viewed under the closed-world assumption and all sets are finite.

\begin{definition}
  A \emph{PDDL domain description} is a tuple $(\Pmc,\Pmc_\der,\Amc,\Rmc)$, where
  % \begin{itemize}
    $\Pmc,\Pmc_\der$ are disjoint sets of predicates;
    \Amc~is a set of \emph{actions}; and
    \Rmc~is a set of \emph{rules}.
  % \end{itemize}
%
  An \emph{action} is of the form $(\vec{x},\pre,\eff)$, with \emph{parameters}~$\vec{x}$,
  \emph{precondition}~\pre, and a finite set \eff of \emph{effects}.
  The precondition is an FO formula over
  $\Pmc\cup\Pmc_\der$ with free variables from~$\vec{x}$, and an \emph{effect} is of the form $(\vec{y},\cond,\add,\del)$,
  where $\vec{y}$ are variables, \cond is an FO formula over $\Pmc\cup\Pmc_\der$ with free variables from
  $\vec{x}\cup\vec{y}$, \add is a finite set of atoms over~\Pmc (without $\Pmc_\der$) with free
  variables from $\vec{x}\cup\vec{y}$, and \del is a finite set of such negated
  atoms.
  \emph{Rules} are of the form $P(\vec{x})\leftarrow \phi(\vec{x})$ with $P\in\Pmc_\der$ and a first-order
  formula~$\phi$ over $\Pmc\cup\Pmc_\der$.
  The set~\Rmc must be \emph{stratified}, \ie fulfill the same condition as sets of \dneg rules when considering the rule bodies~$\phi(\vec{x})$ in NNF.

  A \emph{PDDL task} is a tuple $(\Delta,\Omc,I,G)$, where
  % \begin{itemize}
    $\Delta$ is a PDDL domain description;
    \Omc is a finite set of \emph{objects} including the ones in~$\Delta$;
    $I$ is the \emph{initial state}; and
    $G$ is the \emph{goal}, a closed FO formula over $\Pmc\cup\Pmc_\der$ and constants from~\Omc.
  % \end{itemize}
\end{definition}

Derived predicates are not allowed to be modified by actions, \ie they are only determined by the
current state and the rules.
The semantics of rules is defined similarly to \dneg, \ie for a state~$s$, $\Rmc(s)$ is the
minimal Herbrand model obtained by exhaustively applying the rules in~$\Rmc$, stratum by stratum, to the facts in~$s$ in
order to populate the derived predicates~$\Pmc_\der$.
In fact, all such rule sets \Rmc can be reformulated into \dneg rule sets~\cite{AbHV-95,ThHN-AI05}.
Although the definition of derived predicates requires the head and body to have the same free variables, this is compatible with the semantics of \dneg since additional body variables can be viewed as implicitly existentially quantified.

For an action $a=(\vec{x},\pre,\eff)$ and~$\theta\colon\vec{x}\to\Omc$, %be a substitution of the variables in $\vec{x}$ with objects from~\Omc.
the \emph{ground action} $\theta(a)$ %is obtained by substituting all variables in~$a$ according to~$\theta$, \ie it
has no parameters.
A ground action $a=(\pre,\eff)$ is \emph{applicable} in a state~$s$ if $\Rmc(s)\models\pre$ and its \emph{application} yields a new state
$s\llbracket a\rrbracket$ that contains a ground atom~$\alpha$ iff
% \begin{enumerate}
  (1) there are $(\vec{y},\cond,\add,\del)\in\eff$ and~$\theta$ such
  that $\Rmc(s)\models\theta(\cond)$ and $\alpha\in\theta(\add)$; or
  (2) $\alpha\in s$ and for all $(\vec{y},\cond,\add,\del)\in\eff$ and~$\theta$ it holds that $\Rmc(s)\not\models\theta(\cond)$ or $\lnot\alpha\notin\theta(\del)$.
% \end{enumerate}
%
% The application of a sequence of ground actions to a state is defined accordingly.
%
A \emph{plan}~$\pi$ is a sequence of ground actions such that $\pi$ is applicable in~$I$ and
$\Rmc(I\apply{\pi})\models G$.

\paragraph{eKABs.}
We recall \emph{explicit-input action and knowledge bases} \cite{CMPS-IJCAI16}, but slightly
adapt the notation to be consistent with PDDL notation.

\begin{definition}
  An \emph{eKAB domain description} is a tuple $(\Pmc,\Amc,\Tmc)$, where
  % \begin{itemize}
    \Pmc is a finite set of predicates;
    \Amc is a finite set of \emph{DL actions}; and
    \Tmc is a TBox over the unary and binary predicates in~\Pmc.
  % \end{itemize}
%
  A \emph{DL action} is of the form $(\vec{x},\pre,\eff)$, %with \emph{parameters}~$\vec{x}$,  \emph{precondition} $\pre$, and a finite set $\eff$ of \emph{effects}.
  where
  $\pre$ is an ECQ over~\Pmc with free variables from~$\vec{x}$, and
  \eff consists of \emph{DL effect} of the form $(\vec{y},\cond,\add,\del)$, where \cond is an ECQ over~\Pmc with free variables from $\vec{x}\cup\vec{y}$, and \add and \del are as in PDDL.

  An \emph{eKAB (task)} is a tuple $(\Delta,\Omc,\Omc_0,I,G)$, where
  % \begin{itemize}
    $\Delta$ is an eKAB domain description;
    \Omc is a possibly infinite set of \emph{objects};
    $\Omc_0$ is a finite subset of~\Omc including the objects from~$\Delta$;
    $I$ is the \emph{initial state} (over~$\Omc_0$), which is consistent with~\Tmc; and
    $G$ is the \emph{goal}, a closed ECQ over~\Pmc and constants from~$\Omc_0$.
  % \end{itemize}
\end{definition}

A ground action~$a$ is \emph{applicable} in~$s$ if $s\models\pre$ and $s\apply{a}\cup\Tmc$ is consistent.
The \emph{application} $s\apply{a}$ contains a fact~$\alpha$ iff
%
% \begin{enumerate}
  (1') there are $(\vec{y},\cond,\add,\del)\in\eff$ and~$\theta\colon\vec{y}\to\adom(s)\cup\Omc_0$ such that $s,\Tmc,\theta\models\cond$ and
  $\alpha\in\theta(\add)$; or
  (2') $\alpha\in s$ and for all $(\vec{y},\cond,\add,\del)\in\eff$ and~$\theta$ as above it holds that
  $s,\Tmc,\theta\not\models\cond$ or $\lnot\alpha\notin\theta(\del)$.
% \end{enumerate}
%
%
A \emph{plan}~$\pi$ must be applicable in~$I$ and satisfy $I\apply{\pi},\Tmc\models G$.
Substitutions for effects range over $\adom(s)\cup\Omc_0$ since the TBox may contain objects from~$\Omc_0$.
In the following, we assume \wlg that $\Omc_0\subseteq\adom(s)$.

\paragraph{Additional Assumptions.}
Actions can refer to new objects (parameters~$\vec{x}$ that are not in the
precondition), and thereby increase the number of objects in a state.
To obtain manageable state transition systems, in the literature the assumption of \emph{state-boundedness} is often considered for eKAB-like formalisms; it requires that there exists a bound~$b$ such that any state reachable from~$I$ contains at most~$b$ objects~\cite{CDMP-RR13,DLPV-AAMAS14}.
For a fixed $b$, $b$-boundedness of an eKAB is decidable~\cite{DLPV-AAMAS14}.
Even if a given eKAB is not state-bounded, one could instead ask for the existence of a $b$-bounded plan \cite{ACOS-TOCL17}.
% We could also bound the plan length, which gives us an implicit bound on the number of new objects (depending on the number of parameters of actions).}
%
An abstraction result implies that any $b$-bounded eKAB can be reformulated into one where $|\Omc|=|\Omc_0|+n+b$, where $n$ is the maximum number of parameters of any action~\cite{CDMP-RR13,CMPS-IJCAI16}.
Any plan of the original eKAB can still be encoded using this finite set of objects.
Conversely, any abstract plan can be reformulated into a plan of the original eKAB by replacing the $n+b$ abstract objects by fresh objects from the original set~\Omc where necessary (\ie if those objects did not occur in the previous state).
We will also make this assumption here and assume for simplicity that $\Omc=\Omc_0$ is finite and denote both sets by~\Omc.
% \stefan{longer discussion, \eg how to reconstruct plans of the original eKAB?}

Even for a $b$-bounded eKAB, the TBox~\Tmc can entail the existence of objects that are not mentioned in a state~$s$.
These objects are not affected by the bound~$b$, because they are never explicitly materialized in~$s$.
Hence, the reasoning problems
%  above, \eg whether $s,\Tmc,\theta\models\cond$,
still employ standard DL semantics rather than fixed-domain reasoning~\cite{GaRS-ECAI16}.

We also assume that goals of eKAB and PDDL tasks consist of a single (closed-world) atom $g(\vec{c})$. If that is not the case, we can introduce a new action with the goal formula $G(\vec{x})$ as precondition that adds $g(\vec{x})$ to the state. The parameters~$\vec{x}$ correspond to the constants~$\vec{c}$ in the original goal formula $G(\vec{c})$.
% In the new goal atom, these variables are then re-instantiated with the corresponding original constants.
%\stefan{Write this more formally.}
This assumption simplifies some of the formal definitions, but does not affect our main insights. Without this assumption, for example, the following definition of domain compilation~$f_\delta$ would also need to depend on the goal formula since we later need to compile all UCQs (also those in the goal) into derived predicates.

\paragraph{Compilations.}
To study the relative expressivity of these formalisms, we adapt the notion of compilation schemes \cite{Nebe-JAIR00,ThHN-AI05}.
%, but adapt it for compilations from eKABs to PDDL with derived predicates.

\begin{definition}\label{def:poly-rewr}
  A \emph{compilation scheme} $\mathbf{f}$ from eKABs to PDDL is a tuple of functions $(f_\delta,f_o,f_i,f_g)$ that induces a function~$F$ from eKAB tasks $\Pi=(\Delta,\Omc,I,G)$ to PDDL tasks
  \[ F(\Pi) := \big(f_\delta(\Delta),\Omc\cup f_o(\Delta),f_i(\Omc,I),f_g(\Omc,G)\big) \]
  %and satisfies the following conditions:
  such that
  % \begin{itemize}
    (A) there exists a plan for~$\Pi$ iff there exists a plan for~$F(\Pi$); and
    (B) $f_i$ and~$f_g$ are polynomial-time computable.
  % \end{itemize}
  If $\|f_\delta(\Delta)\|$ and~$\|f_o(\Delta)\|$ are bounded polynomially (exponentially) in~$\|\Delta\|$, then $\mathbf{f}$ is \emph{polynomial} (\emph{exponential}).
  
  If for every plan~$P$ solving an instance~$\Pi$ there exists a plan~$P'$ solving~$F(\Pi)$ such that $\|P'\|\le c\cdot\|P\|^n+k$ for positive integer constants~$c,n,k$, we say that $\mathbf{f}$ \emph{preserves plan size polynomially}. If $n=1$, it \emph{preserves plan size linearly}, and if additionally $c=1$, then it \emph{preserves plan size exactly}.
\end{definition}
% \stefan{The rules in Section~\ref{sec:arbitrary-compilation} depend on the goal, unless we replace "goal" by a "goal schema"!}
%
To be considered of the same expressivity, there should be a polynomial compilation scheme between two formalisms that at least preserves plan size polynomially, but ideally exactly.
Compilation schemes for specific DLs are restricted to TBoxes~\Tmc formulated in the specified DL.
For example, there exists an exponential compilation scheme for DL-Lite eKABs that rewrites ECQs in-situ into FO conditions \cite{CMPS-IJCAI16}.
This compilation has been optimized by~\citeauthor{BHKS-KR21}
\shortcite{BHKS-KR21} by using derived predicates to simplify the conditions.
In contrast, the compilations we investigate in the following directly use \dneg-rewritings to compile UCQs into derived predicates.
% \stefan{Discuss DL-Lite-eKAB rewriting here ($f_g$ depends on $\Delta$!), say what we do differently}

\section{Compiling TBoxes into Derived Predicates}
\label{sec:generic-compilation}

We start by describing a generic compilation that exploits \dneg-rewritability of specific DLs to compile open-world ECQs into closed-world formulas using derived predicates.
%
% In contrast to \cite{CMPS-IJCAI16}, a \dneg-rewriting does not only modify the CQs, but also involves extending the state,
% which we do via derived predicates.

One restriction of PDDL derived predicates is that they cannot occur in action effects.
However, \dneg-rewritings may derive new facts about the predicates occurring in the
state and TBox, which can also occur in action effects.
To circumvent this issue, we observe that query rewriting is only necessary for \emph{evaluating conditions},
but does not affect the states themselves.
Therefore, we separate the condition evaluation and action effects by using two disjoint signatures of predicates:
we use the \dneg-rewriting on a copy~$s'$ of the state~$s$ in which each original predicate~$P$
has been replaced by a copy~$P'$.
This copying process can be simulated by making~$P'$ a derived predicate with the rule
$P'(\vec{x})\leftarrow P(\vec{x})$.
In the following, we denote by $\Tmc'$ the result of replacing each predicate~$P$ in~\Tmc by~$P'$,
and likewise for ECQs~$Q$.

Another issue is that the rewriting may introduce additional constants, which are not allowed for instantiating actions, because that would change their behavior.
We simulate this via two new predicates $S$ and $N$ and a new action~$a_{S,N}$ with precondition $\lnot S$ and unconditional effect $S$ and $N(o)$ for all objects $o$ that are not in the original domain description.
All other action conditions are also extended by $S$ and $\lnot N(\vec{x}):=\bigwedge_{x\in\vec{x}}\lnot N(x)$ for their parameters~$\vec{x}$, to ensure that they can only be instantiated by the original objects.

% We can now define the compilation in detail.

\begin{definition}\label{def:compilation}
  Let $((\Pmc,\Amc,\Tmc),\Omc,I,G)$ be a $b$-bounded eKAB for which all UCQs
  as well as~$\bot$ are \dneg-rewritable \wrt \Tmc.
  Let $\Rmc_{\Tmc'}$ be the disjoint union of $\Rmc_{\Tmc',\bot}$ and all $\Rmc_{\Tmc',Q'}$ for
  ECQs~$Q$ in the eKAB.
  Then the PDDL task $((\Pmc\cup\{S,N\},\Pmc',\Amc',\Rmc'),\Omc',I,G')$ is obtained as follows:
  \begin{itemize}
    \item $\Pmc'$ consists of the predicates occurring in~$\Rmc_{\Tmc'}$;
    \item $\Amc'$ contains $a_{S,N}$ and all actions obtained from~\Amc by replacing preconditions~\pre by $S\land\lnot N(\vec{x})\land\lnot P_\bot\land\pre'_{\Tmc'}$ and effect conditions~\cond by~$\cond'_{\Tmc'}$;
    % \item $\begin{array}[t]{@{}r@{ }l@{}}&\Amc'=\{a_{S,N}\}\cup{}\\&\{(\vec{x},S\land\lnot N(\vec{x})\land\lnot P_\bot\land\pre'_{\Tmc'},\eff')\!\mid\!(\vec{x},\pre,\eff)\in\Amc\}\end{array}$
    % $\eff'=\{(\vec{y},\cond'_{\Tmc'},\add,\del)\mid(\vec{y},\cond,\add,\del)\in\eff\}$;
    % for each $(\vec{x},\pre,\eff)\in\Amc$, $\Amc'$ contains
    % $(\vec{x},\lnot P_\bot\land\pre'_{\Tmc'},\eff')$, where, for each
    % $(\vec{y},\cond,\add,\del)\in\eff$, $\eff'$ contains
    % $(\vec{y},\cond'_{\Tmc'},\add,\del)$,
    \item $\Rmc'=\Rmc_{\Tmc'}\cup\{P'(\vec{x})\leftarrow P(\vec{x})\mid P\in\Pmc\}$;
    \item $\Omc'=\Omc\cup\adom(\Rmc_{\Tmc'})$; and
    \item $G'=S\land\lnot P_\bot\land G$.
  \end{itemize}
\end{definition}
%\alisa{$P_\bot$ is not defined}
%\bot is defined as the CQ \exists x.\bot(x) (previous page)

The goal does not need to be rewritten \wrt \Tmc since we assumed that it is a single (closed-world) atom.

\begin{theorem}
  \label{thm:generic-compilation}
  Def.~\ref{def:compilation} is a compilation scheme from \dneg-rewritable eKABs to PDDL that preserves plan size exactly.
\end{theorem}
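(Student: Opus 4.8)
The plan is to verify the three requirements of Definition~\ref{def:poly-rewr} directly. Requirement~(B) is immediate: $f_i$ is essentially the identity on~$I$, and $f_g$ merely wraps the single-atom goal~$G$ into $S\land\lnot P_\bot\land G$, so both are polynomial-time computable. The remaining work -- requirement~(A) together with the exact plan-size bound -- I would prove in one go, by exhibiting a correspondence between eKAB plans for~$\Pi$ and PDDL plans for $F(\Pi)$ that changes the plan length by exactly one, namely the leading action $a_{S,N}$ of Definition~\ref{def:compilation}; this also fixes the additive constant $k=1$.

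The technical heart is a correspondence lemma relating an eKAB state~$s$ with $\adom(s)\subseteq\Omc$ to the PDDL state $\hat s := s\cup\{S\}\cup\{N(o)\mid o\in\Omc'\setminus\Omc\}$. First, no rule of~$\Rmc'$ has an unprimed $\Pmc$-predicate (nor $S$ or $N$) in its head, so $\Rmc'(\hat s)$ agrees with~$s$ on~$\Pmc$ and with~$\hat s$ on $S,N$; hence closed-world atoms -- in particular the goal atom $g(\vec c)$ -- are evaluated identically on both sides. Second, the copy rules $P'(\vec x)\gets P(\vec x)$ reconstruct the primed copy~$s'$ of~$s$, so $\Rmc'(\hat s)$ restricted to the primed and auxiliary signature coincides with $\Rmc_{\Tmc'}(s')$. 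Invoking Definition~\ref{def:datalog-rewriting} and its extension to ECQs then yields, for every ECQ~$Q$ occurring in the eKAB and every assignment~$\theta$ into~$\Omc$, that $\Rmc'(\hat s)\models Q'_{\Tmc'}(\theta(\vec x))$ iff $s,\Tmc,\theta\models Q$, and that $\Rmc'(\hat s)\models P_\bot$ iff $s\cup\Tmc$ is inconsistent. Three points deserve care: the disjoint union of the component rewritings must not cause cross-talk (auxiliary and answer predicates are renamed apart so that the components only communicate through the shared copy predicates); $\Rmc'$ must be stratified (put the unprimed $\Pmc\cup\{S,N\}$ in the lowest stratum and shift the strata of $\Rmc_{\Tmc'}$ upward); and the existentials inside ECQs as well as the effect variables~$\vec y$ must be guarded by $\lnot N(\cdot)$ so that, exactly as in the eKAB semantics, they range only over the original objects~$\Omc$ despite the enlarged active domain~$\Omc'$ of $\Rmc'(\hat s)$.

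With the lemma in hand, both directions are inductions on plan prefixes. For the forward direction, take an eKAB plan $\pi=a_1\cdots a_m$, which instantiates actions only over~$\Omc$, and put $\pi' := a_{S,N}\,a_1'\cdots a_m'$ with $a_i'$ the compiled form of~$a_i$. One shows inductively that the PDDL run tracks the hatted eKAB states: $a_{S,N}$ is applicable in~$I$ (as $S\notin I$) and produces~$\hat I$; and, since effects agree on~$\Pmc$ and leave $S,N$ untouched, applying~$a_i'$ in~$\hat s_{i-1}$ produces~$\hat s_i$, where $s_i := I\apply{a_1\cdots a_i}$. Each~$a_i'$ is applicable because its guards~$S$ and $\lnot N(\cdot)$ hold, $\pre'_{\Tmc'}$ holds by the lemma, and $\lnot P_\bot$ holds -- the last point being the delicate one: consistency of the \emph{current} state~$s_{i-1}$ with~$\Tmc$ is exactly the successor-consistency condition that made~$a_{i-1}$ applicable in the eKAB, while $I\cup\Tmc$ is consistent by assumption when $i=1$. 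Finally $G'=S\land\lnot P_\bot\land G$ holds in $\Rmc'(\hat s_m)$: $S$ trivially, $\lnot P_\bot$ because $s_m\cup\Tmc$ is consistent (successor-consistency of~$a_m$, or the initial-state assumption if $m=0$), and~$G$ by the lemma. Conversely, any PDDL plan for $F(\Pi)$ must start with $a_{S,N}$ -- every other action requires~$S$, only $a_{S,N}$ makes~$S$ true, $S$ is never deleted, and the precondition $\lnot S$ of $a_{S,N}$ can hold only initially -- and all remaining actions are instantiated over~$\Omc$ because of their $\lnot N(\cdot)$ guards; deleting $a_{S,N}$ and reading off the eKAB instantiations gives, by the same induction run backwards, an eKAB plan, where now the $\lnot P_\bot$ guard of $a_{i+1}'$ (and of~$G'$ for the last step) supplies the successor-consistency requirements of the extracted plan.

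Combining the two directions proves~(A), and the length shift of exactly one shows that~$\mathbf f$ preserves plan size exactly; together with~(B) this establishes the claim. I expect the main obstacle to be the bookkeeping around the two signatures together with the off-by-one consistency check: one must make sure that the rewriting always "sees" a faithful primed copy of the current state while action effects touch only the original predicates, and that the fresh constants and enlarged active domain introduced by $\Rmc_{\Tmc'}$ are neutralized by~$N$ without gaining or losing any plans.
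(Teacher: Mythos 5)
Your proposal is correct and follows essentially the same route as the paper's own (much terser) proof: the primed state copy via the rules $P'(\vec{x})\leftarrow P(\vec{x})$, consistency tracking via $\lnot P_\bot$, object control via $a_{S,N}$ and the $S$/$N$ predicates, and a plan correspondence that differs only by the initial action $a_{S,N}$. The extra care you take about stratifying $\Rmc'$, renaming the component rewritings apart, and guarding effect variables and ECQ existentials with $\lnot N(\cdot)$ only makes explicit details that the paper's proof asserts without elaboration (\eg that \enquote{the substitutions for instantiating action and effect conditions range over the same objects in both tasks}).
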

\begin{proof}
  The new goal~$G'$ can be computed in polynomial time since we only add $S\land\lnot P_\bot$.
  Moreover, $a_{S,N}$ and the set of rules $\Rmc_{\Tmc'}$ depend only on the objects and conditions occurring in the original eKAB domain description.
  We show that all plans of either planning task are also plans for the other (modulo the initializing action~$a_{S,N}$).

  Consistency of $s\cup\Tmc$ is equivalent to $\Rmc_{\Tmc',\bot}(s')\not\models P_\bot$,
  where $s'$ is obtained from~$s$ by replacing each~$P$ with~$P'$.
  Hence, the rules $P'(\vec{x})\leftarrow P(\vec{x})$ and the conjuncts $\lnot P_\bot$ in all conditions ensure that all states reached while executing a plan for the PDDL task are
  consistent with~\Tmc.
  This includes the initial state since $I$ is consistent with~\Tmc by assumption.
  
  Similarly, for any ECQ~$Q$, we have $\Rmc'(s)\models Q'_{\Tmc'}(\theta(\vec{x}))$ iff
  $s',\Tmc',\theta\models Q'(\vec{x})$,
  %
  % Since $s',\Tmc',Q'$ differ from $s,\Tmc,Q$ only in the predicate names, the latter
  which is equivalent
  to $s,\Tmc,\theta\models Q(\vec{x})$.
  Due to $a_{S,N}$, the substitutions for instantiating action and effect conditions range over the same objects in both tasks.
  Hence, both formalisms allow equivalent action applications in each state and can reach a goal state by the same plans.
\end{proof}

For example, this immediately implies that eKABs with Horn-\SHIQ TBoxes have exponential compilations into PDDL with derived predicates (without negation) and for $\ELH_\bot$ and DL-Lite we even obtain polynomial compilations~\cite{EOS+-AAAI12,BiOr-RW15}.
The latter also holds for Horn-\SHOIQ if all conditions are restricted to EIQs~\cite{OrRS-KR10}.
In general, our construction applies to any ontology language where UCQs (or IQs) are \dneg-rewritable, and to any specific TBoxes and queries that happen to be \dneg-rewritable.

% \stefan{Discuss difference to previous DL-Lite-eKAB approach: This translation gets rid of the complex rewritten queries of the DL-Lite approach, skipping directly to derived predicates (without negation or universal quantification).}

\section{A Polynomial Rewriting for Horn-\ALCHOIQ}
\label{sec:polynomial-compilation}

To extend the compilability results, we develop a polynomial-size rewriting for UCQs over Horn-\ALCHOIQ into \dneg.
It is based on a query answering approach developed by \citeauthor{CaDK-KR18} \shortcite{CaDK-KR18} and encodes the relevant definitions from their paper into \dsneg rules, which extend \dneg by \emph{set terms} that denote sets of objects. We then adapt a known polynomial translation to obtain a set of \dneg rules \cite{OrRS-KR10}.

The full details can be found in the appendix, %supplementary material\footnote{\label{ref:supp}\url{https://gitlab.perspicuous-computing.science/a.kovtunova/pddl-horndl.git}}, 
but we describe the main ideas here. 
The rewriting starts by translating the Horn-\ALCHOIQ axioms of the given TBox~\Tmc into Datalog rules \cite{CaDK-KR18}.
However, since the resulting rule set is exponential, we here reformulate it into polynomially many \dsneg rules, loosely following ideas from \citeauthor{OrRS-KR10} \shortcite{OrRS-KR10}.
The original approach uses exponentially many constants~$t_X$, where $X$ is a set of unary predicates, to describe anonymous objects that satisfy~$X$.
In \dsneg, these individuals can directly be described by sets $\{X\}$ that can be used as arguments to predicates.
For example, our rewriting introduces a new predicate $\role(r,X,Y)$ to express that the objects represented by~$X$ and~$Y$ are connected by the binary predicate~$r$, which is now viewed as an additional element of~\Omc.
Using such additional predicates, the translation of the original Datalog rules by \citeauthor{CaDK-KR18} \shortcite{CaDK-KR18} is straightforward.

The remainder of the rewriting encodes the \emph{filtration phase} from that paper into several strata of \dsneg rules.
We use bespoke predicates to encode the constructions of expanded states and graphs in Definitions~7 and~8 in \cite{CaDK-KR18}, \eg to compute a partial expansion of the input state to obtain more query matches and an acyclicity check over a dependency graph between query variables to filter out spurious matches.
After a translation from \dsneg into \dneg \cite{OrRS-KR10}, correctness of the rewriting follows mostly from Theorem~3 in the paper by \citeauthor{CaDK-KR18} \shortcite{CaDK-KR18}.

\begin{theorem}
  \label{thm:polynomial-compilation}
  UCQs over Horn-\ALCHOIQ TBoxes are \dneg-rewritable with rewritings of polynomial size.
\end{theorem}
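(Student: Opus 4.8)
The plan is to take the query-answering procedure of \citeauthor{CaDK-KR18}~\shortcite{CaDK-KR18} for Horn-\ALCHOIQ and compile it, stage by stage, into a stratified \dneg program whose size is polynomial in $\|\Tmc\|+\|q\|$, with a distinguished predicate $P_q$ that flags the correct answers. First I would normalise~\Tmc and run the translation of that paper, which yields a Datalog program whose models speak about the named individuals of the input state~$s$ together with \emph{anonymous} representatives $t_X$, one for each set $X$ of normalised concept names; there are exponentially many such constants, so this program is exponential as written. Following the device used by \citeauthor{OrRS-KR10}~\shortcite{OrRS-KR10} for their Datalog rewriting of Horn-\ALCHOIQ instance queries, I would not materialise these constants but move to \dsneg, \ie \dneg with \emph{set terms}: an anonymous element is the term $\{X\}$, the concept names it satisfies are recorded by facts $\concept(A,\{X\})$, and binary connections between representatives are recorded by the predicate $\role(r,X,Y)$ mentioned above, with the role name~$r$ treated as an object. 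Each normalised axiom then becomes a fixed, constant-size \dsneg rule over these predicates, so this part has size linear in $\|\Tmc\|$.

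The technically delicate part is not the saturation but the \emph{filtration} of query matches — naively unfolding the anonymous forest would both miss matches that wind through it and admit spurious matches that no model realises. I would encode Definitions~7 and~8 of~\cite{CaDK-KR18} into further strata: (i)~the bounded partial expansion of~$s$ that exposes just enough anonymous witnesses for every genuine UCQ match, whose depth is bounded by the number of query atoms, so the witnesses can be named by singleton set terms of bounded content; and (ii)~the dependency graph on the query atoms of a candidate match together with the acyclicity test that certifies the match. For~(ii) I would introduce bespoke predicates for the vertices and edges of that graph, indexed by the (polynomially many) query variables, plus a stratum computing reachability in it, and derive $P_q(\vec{x})$ exactly when the graph of the corresponding match is acyclic. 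All auxiliary predicates then have arity polynomial in $\|q\|$ and the rule set stays polynomial; stratification is inherited from the layered structure of the procedure (saturation, then expansion, then candidate generation, then the acyclicity check, then $P_q$).

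Finally I would eliminate the set terms with the polynomial \dsneg-to-\dneg translation of \citeauthor{OrRS-KR10}~\shortcite{OrRS-KR10}: every set term occurring is a singleton $\{X\}$ with $X$ a subset of the normalised concept names of~\Tmc, so it can be represented by a fresh object carrying membership facts, and the (bounded) set operations and tests become ordinary \dneg rules. This gives a stratified \dneg rewriting $\Rmc_{\Tmc,q}$ of polynomial size. Correctness — $\Rmc_{\Tmc,q}(s)\models P_q(\theta(\vec{x}))$ iff $s,\Tmc,\theta\models q(\vec{x})$ for every state~$s$ and assignment~$\theta$ over $\adom(s)$ — then follows from soundness and completeness of the procedure of \citeauthor{CaDK-KR18}~\shortcite{CaDK-KR18} (their Theorem~3) together with the faithfulness of the two encodings; the predicate~$P_\bot$ for the CQ~$\bot$ required by Definition~\ref{def:compilation} is simply the inconsistency flag that the saturation already computes.

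The main obstacle is the second stage: encoding the filtration — in particular the dependency-graph acyclicity condition on matches through the anonymous part — into stratified rules while keeping both the arity of the auxiliary predicates and the number of rules polynomial. The key points to get right are that the partial expansion is shallow enough that anonymous witnesses are named by set terms of bounded size, rather than by the unbounded nesting one might fear, and that the reachability stratum operates on one candidate match at a time rather than enumerating exponentially many matches simultaneously.
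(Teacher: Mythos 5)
Your proposal follows essentially the same route as the paper: translate the normalized Horn-\ALCHOIQ axioms into polynomially many \dsneg rules with set terms standing for the anonymous representatives $t_X$ (with $\concept$ and $\role$ predicates treating concept and role names as constants), encode the canonical-model construction and the filtration/acyclicity check of Definitions~7 and~8 of \citeauthor{CaDK-KR18} \shortcite{CaDK-KR18} as further strata over the query variables, eliminate set terms via the polynomial translation of \citeauthor{OrRS-KR10} \shortcite{OrRS-KR10}, and inherit correctness from their Theorem~3. The only cosmetic deviation is your description of the last step as introducing ``a fresh object'' per set --- taken literally that would be exponential; the cited translation (which the paper uses) instead encodes sets as bit vectors, polynomially increasing predicate arities.
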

By Theorem~\ref{thm:generic-compilation}, we thus obtain a polynomial compilation scheme for Horn-\ALCHOIQ eKABs into PDDL that preserves plan size exactly.
Admittedly, this construction is rather complex, but theoretically very interesting, in particular in light of the next section.

% \stefan{conjecture: Can we similarly translate the Horn-\SHIQ rewriting induced by~\cite{EOS+-AAAI12}? The ABox saturation should work in \ds similar to the above, but the query rewriting part could be problematic (naively, exponentially many queries are generated).} -> does not work

% \stefan{conjecture: the filtration may also work for Horn-\SHOIQ (using techniques from~\cite{EOS+-AAAI12}), which would make the following non-compilability result about Horn-\SROIQ tight} -> may work, but unclear

\section{Non-Compilability for Expressive eKABs}
\label{sec:no-compilation}

As a counterpoint to the previous section, we now prove that polynomial compilations cannot exist for Horn-\SROIQ, not even if we allow the plan size to increase polynomially.
Horn-\SROIQ differs from Horn-\ALCHOIQ only in allowing one additional type of axiom, called \emph{complex role inclusions}.
The following result is inspired by a similar non-compilability result for PDDL with derived predicates \cite{ThHN-AI05}.
We start with some observations about the complexity of the involved problems.
The \emph{polynomial-step planning problem} is to decide whether a given planning task has a plan of length polynomial (for some given polynomial), and the \emph{1-step planning problem} is the special case where the polynomial is~$1$.

\begin{theorem}
  \label{thm:poly-step-pddl-exptime}
  The polynomial-step planning problem for PDDL is \ExpTime-complete.
\end{theorem}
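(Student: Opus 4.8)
The statement claims \ExpTime-completeness of the polynomial-step planning problem for PDDL (with derived predicates). The plan is to prove membership and hardness separately, and to be careful that "PDDL" here includes stratified derived predicates, conditional effects, and — crucially — a compact (propositional/ground or lifted) encoding, so that the state space is exponential in the input size.

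\medskip
\noindent\textbf{Upper bound (\ExpTime\ membership).} First I would argue that a single state has size polynomial in the input (the number of ground atoms over the predicates and objects is at most exponential, but for a fixed arity bound it is polynomial; in the general lifted case it is exponential, which still suffices since $\ExpTime = \bigcup_k \mathrm{DTIME}(2^{n^k})$ and we will allow exponentially many states anyway). The key subroutine is: given a state $s$ and a ground action $a$, decide applicability and compute $s\apply{a}$. This requires evaluating the stratified rule set $\Rmc$ to fixpoint to obtain $\Rmc(s)$, then evaluating the (first-order, hence exponential-time over the bounded domain) precondition and effect conditions. Stratified Datalog fixpoint computation over a database of size $D$ runs in time polynomial in $D$, so each step costs time polynomial in the (at most exponential) state size, i.e. exponential in the input. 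Now the polynomial-step planning problem fixes a polynomial $p$: I would perform a depth-bounded search (DFS or BFS) of the reachable state graph to depth $p(n)$. The branching factor is the number of ground actions, at most exponential in $n$; the depth is polynomial; so the naive search tree has size at most $(\text{exp})^{p(n)} = 2^{\mathrm{poly}(n)}$, and each node costs exponential time, giving an overall \ExpTime\ bound. (Alternatively, and more cleanly, I would observe that reachability to depth $p(n)$ in a graph with at most $2^{\mathrm{poly}(n)}$ vertices and $\mathrm{poly}$-time-per-edge — here exp-time per edge — is decidable in \ExpTime\ by, e.g., $p(n)$ rounds of computing the successor set, each round touching at most $2^{\mathrm{poly}(n)}$ states.) This shows membership in \ExpTime.

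\medskip
\noindent\textbf{Lower bound (\ExpTime\ hardness).} For hardness I would reduce from the acceptance problem of a deterministic Turing machine $M$ running in time $2^{n^k}$ on input $w$ of length $n$ — or, more convenient for planning, from the standard \ExpTime-complete problem of plan existence for (ordinary, non-derived) PDDL with a bounded number of objects, or from the word problem of a polynomial-space alternating TM. The cleanest route: it is classical that \emph{(unbounded-horizon) classical PDDL planning is \ExpTime-complete} already without derived predicates (Bylander; Erol, Nau, Subrahmanian). But we need a \emph{polynomial}-length plan. So the real content is to simulate an exponentially long computation by a polynomially long plan, which is exactly what derived predicates buy us: a single action can "compute", via a stratified rule set over $\mathrm{poly}(n)$ auxiliary binary counter predicates, a predicate encoding the entire $2^{n^k}$-step run of $M$, or at least advance an exponentially-large configuration in one step. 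Concretely I would encode a tape cell index and a time index each as a tuple of $n^k$ Boolean object-pairs, use derived predicates to express "the successor configuration relation", and let one action apply it; then $2^{n^k}$ such applications — no, that's still exponentially many steps. The right trick (following Thiébaux–Hoffmann–Nebel) is a \emph{doubling} or \emph{iterated-squaring} construction: derived predicates can define $\mathrm{Reach}_{2^{i+1}}$ from $\mathrm{Reach}_{2^i}$ (transitive-closure-style composition), so after only $n^k$ actions — one per squaring level — we have computed $2^{n^k}$-step reachability of $M$'s configuration graph, and a final goal-checking action tests acceptance. This yields a polynomial-length plan iff $M$ accepts, establishing \ExpTime-hardness (and in fact \ExpTime-hardness already of the 1-step problem, by folding all the squaring into the rule set and a single action).

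\medskip
\noindent\textbf{Main obstacle.} The delicate part is the hardness direction: making the derived-predicate rule set correctly and \emph{stratifiably} encode iterated squaring of an exponentially large transition relation using only polynomially many predicates of polynomially bounded arity, while keeping the rule set stratified (the composition/reachability rules must not create negative cycles). I would need $\mathrm{poly}(n)$ strata — one per squaring step — with the $i$-th stratum defining the $2^i$-reachability predicate positively from the $(i-1)$-st, and auxiliary predicates for binary arithmetic on the $n^k$-bit indices; checking that the arithmetic rules are themselves stratified and that the whole thing is polynomially computable is the technical heart. A secondary subtlety is pinning down precisely which PDDL fragment is meant (lifted with unbounded arity vs. a fixed arity bound vs. ground), since that affects whether a single state is polynomial or exponential in size; I would state the assumption explicitly at the start and note that the \ExpTime\ bound and hardness are robust across these choices.
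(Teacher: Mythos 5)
Your membership argument is essentially the paper's: enumerate the at most exponentially many ground-action sequences of polynomial length and verify each one, the dominant cost being the fixpoint evaluation of the stratified rules after each step, which is in \ExpTime by the known combined complexity of (stratified) Datalog. That part is fine.

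The gap is in the hardness direction. The paper does not construct a reduction at all; it simply invokes the known result that the \emph{1-step} planning problem for PDDL with derived predicates is already \ExpTime-complete (Theorem~1 of \citeauthor{ThHN-AI05} \shortcite{ThHN-AI05}), which in turn rests on the \ExpTime-completeness of Datalog query evaluation in combined complexity. Your reconstruction of that result goes wrong at its self-declared ``technical heart.'' First, your initial instinct --- encode time and cell indices as $n^k$-tuples of bits and write rules deriving the tape content $T(t{+}1,c,a)$ at time $t{+}1$ from the contents at time $t$ --- is the correct and standard construction, and it does \emph{not} require exponentially many actions: the exponentially many derivation steps are all performed inside the single least-fixpoint evaluation of the derived predicates in one state, so one action (indeed, just a goal test) suffices, and the whole rule set is positive, so stratification is a non-issue. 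Second, the ``fix'' you substitute for it, iterated squaring of a configuration-successor relation $\mathrm{Reach}_{2^i}$, does not work for \ExpTime: a configuration of a $2^{n^k}$-time-bounded machine occupies up to $2^{n^k}$ tape cells and therefore cannot be passed as a polynomial-arity tuple to a $\mathrm{Reach}$ predicate; if you instead square reachability over polynomial-size node descriptions you only capture \PSpace. (A minor, non-load-bearing error: plan existence for PDDL \emph{without} derived predicates is \PSpace-complete propositionally and \ExpSpace-complete in the lifted case, not \ExpTime-complete.) So the architecture of your proof matches the paper's, but the hardness construction as you describe it would fail; reverting to the direct Turing-machine simulation in the rule set (or citing the 1-step result directly, as the paper does) repairs it.
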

\begin{proof}
  Hardness follows from the complexity of the 1-step planning problem for PDDL with derived predicates \cite[Theorem~1]{ThHN-AI05}.
  Membership can be seen as follows. In exponential time, we can enumerate all plans of polynomial length (for a fixed polynomial). For each such plan, we can check whether each ground action was applicable, which facts were generated or deleted, and whether the goal is satisfied in the end. The most complex part of this check is the evaluation of the derived predicates after each action, which can be done in exponential time~\cite{DEGV-ACMCS01}.
\end{proof}

\begin{theorem}
  \label{thm:one-step-horn-sroiq-twoexptime}
  The 1-step planning problem for Horn-\SROIQ eKABs is \TwoExpTime-complete.
  % \stefan{We don't really need this result later, but it is useful to outline the complexity landscape.}
\end{theorem}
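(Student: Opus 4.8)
The plan is to establish matching lower and upper bounds, both of which reduce to the combined complexity of Boolean conjunctive query (CQ) entailment over Horn-\SROIQ knowledge bases, which is \TwoExpTime-complete~\cite{OrRS-IJCAI11}.

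\emph{Hardness.} I would reduce Boolean CQ entailment to the 1-step planning problem. Given a state~$s$, a Horn-\SROIQ TBox~$\Tmc$ with $s\cup\Tmc$ consistent (the \TwoExpTime-hardness of CQ entailment already applies to consistent knowledge bases), and a Boolean CQ~$q$, construct the eKAB with TBox~$\Tmc$, objects $\Omc=\adom(s)$, initial state $I=s$, goal atom~$g$ over a fresh nullary predicate not occurring in~$\Tmc$, and a single parameter-free DL action~$a$ whose precondition is the ECQ~$[q]$ and whose only effect unconditionally adds~$g$. Then the sole candidate for a 1-step plan is~$(a)$: it is applicable in~$I$ iff $s\models[q]$, that is, iff $s,\Tmc\models q$; the eKAB consistency side condition on $I\apply{a}\cup\Tmc$ holds trivially since~$g$ does not appear in the DL vocabulary and $I=s$ is consistent with~$\Tmc$; and after applying~$a$ the goal~$g$ holds. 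Hence the eKAB has a 1-step plan iff $s,\Tmc\models q$, and the construction is plainly polynomial, yielding \TwoExpTime-hardness.

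\emph{Membership.} A 1-step plan is a single ground action, and there are at most exponentially many of these (bounded by $\|\Amc\|\cdot|\Omc|^{n}$ with~$n$ the maximal number of action parameters), which can be enumerated deterministically within \TwoExpTime. For each ground action $a=(\pre,\eff)$ one checks: (i)~applicability in~$I$, i.e.\ evaluating the ECQ~$\pre$ in~$I$ under~$\Tmc$ and testing consistency of $I\apply{a}\cup\Tmc$; (ii)~the successor state~$I\apply{a}$, obtained by evaluating each effect condition~$\cond$ under all (at most exponentially many) substitutions of its variables over $\adom(I)\cup\Omc$ to determine~$\add$ and~$\del$; and (iii)~that the goal holds in~$I\apply{a}$, which is immediate for a closed-world goal atom. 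Following the ECQ semantics, evaluating an ECQ reduces to iterating over the finitely many object substitutions for its quantifiers and, for its UCQ subformulas, to CQ entailment tests over the Horn-\SROIQ knowledge base; each such test, and the consistency test, is in \TwoExpTime~\cite{OrRS-IJCAI11}. Since $I\apply{a}$ has at most exponential size, all of (i)--(iii) stay within \TwoExpTime, and composing an exponential enumeration with doubly exponential work per step still gives a \TwoExpTime\ procedure.

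\emph{Main obstacle.} The difficulty is in the bookkeeping rather than any deep new idea: one must check that none of the planning overhead --- the number of ground actions, the number of substitutions arising when computing effects, the size of~$I\apply{a}$, and the recursive evaluation of ECQs with their own closed-world negations and domain quantifiers --- ever pushes the cost past doubly exponential, and that in the hardness reduction the DL consistency side condition of eKAB action application is genuinely vacuous once the goal predicate is chosen fresh. With these in place, the theorem follows by plugging in the combined complexity of CQ answering over Horn-\SROIQ~\cite{OrRS-IJCAI11}.
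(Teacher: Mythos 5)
Your proposal is correct and follows essentially the same route as the paper's (much terser) proof: hardness via a polynomial reduction from Horn-\SROIQ reasoning using a single action whose precondition is the query and whose effect sets a fresh goal atom (the paper spells out exactly this gadget in its appendix for Theorem~\ref{thm:horn-sroiq-no-polynomial-compilation}), and membership by enumerating the exponentially many ground one-step plans and evaluating ECQs with \TwoExpTime CQ-entailment tests as the dominant subroutine. The only cosmetic difference is that the paper notes the enumeration and the closed-world/first-order parts of ECQ evaluation fit in \PSpace, while you bound everything directly by \TwoExpTime, which is equally sufficient.
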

\begin{proof}
  Hardness follows from the complexity of reasoning in Horn-\SROIQ~\cite{OrRS-KR10}.
  Membership holds since we can enumerate all candidate 1-step plans in \PSpace, the CQs in preconditions and the goal can be evaluated in \TwoExpTime \cite{OrRS-IJCAI11} and the remaining parts of the ECQs can be evaluated in \PSpace \cite{AbHV-95}.
\end{proof}

While these complexity results already indicate that reasoning in Horn-\SROIQ is more powerful than (polynomial) planning in PDDL with derived predicates, they tell us nothing about the relative expressivity of these two formalisms. There could still exist a polynomial-size compilation scheme from the former to the latter that preserves plan size polynomially, because the compilation can use arbitrary computational resources as long as the result is of polynomial size.

To prove that such a compilation indeed cannot exist, we follow \citeauthor{ThHN-AI05} \shortcite{ThHN-AI05} by using the notion of \emph{advice-taking} Turing machines \cite{KaLi-LEM82}. Such machines are equipped with an advice oracle~$a$, which is a function from positive integers to bit strings. On input~$w$, the machine receives the \emph{advice}~$a(\|w\|)$ and then starts its computation as usual. The advice depends only on the length of the input, but not on its contents. An advice oracle is \emph{polynomial} if the length of $a(\|w\|)$ is bounded polynomially in~$\|w\|$. $\ExpTime\poly$ (\emph{non-uniform} \ExpTime) is the class of problems that can be decided by Turing machines with polynomial advice and exponential time bound.

The following result shows that a polynomial compilation scheme from Horn-\SROIQ eKABs to PDDL would imply that the weak exponential hierarchy collapses completely.
The latter is considered to be unlikely; in particular, it would mean that one can eliminate any bounded quantifier prefix in second-order logic and Presburger arithmetic \cite{GoLV-MFCS95,Haas-LICS14}.

\begin{theorem}
  \label{thm:horn-sroiq-no-polynomial-compilation}
  Unless $\ExpTime^\NP=\ExpTime$, there is no polynomial compilation scheme from Horn-\SROIQ eKABs to PDDL preserving plan size polynomially.
\end{theorem}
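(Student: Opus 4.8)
The plan is to adapt the non-compilability technique of \citeauthor{ThHN-AI05}~\shortcite{ThHN-AI05}, which uses advice-taking Turing machines. Suppose, towards a contradiction, that there is a polynomial compilation scheme $\mathbf{f}=(f_\delta,f_o,f_i,f_g)$ from Horn-\SROIQ eKABs to PDDL that preserves plan size polynomially. Then there are constants $c,k$ (plus an exponent that is irrelevant here) such that every length-$1$ plan of a source task $\Pi$ maps to a plan of length at most $c+k$ of $F(\Pi)$; together with condition~(A) of Def.~\ref{def:poly-rewr}, for the instances we will construct this makes ``$\Pi$ has a plan'' equivalent to ``$F(\Pi)$ has a plan of length $\le c+k$''. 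By Theorem~\ref{thm:poly-step-pddl-exptime} the latter is decidable in time exponential in $\|F(\Pi)\|$. So if $F(\Pi)$ can be produced with polynomial resources, a \TwoExpTime-hard source problem would land in \ExpTime, which is the tension we want to exploit. The one wrinkle is that $f_\delta$ and $f_o$ need not be efficiently computable, and this is exactly what polynomial advice will supply.

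\textbf{A hard family with length-determined domain descriptions.} The first step is to refine Theorem~\ref{thm:one-step-horn-sroiq-twoexptime} into the following form: there is a \TwoExpTime-hard language $L$ and a polynomial-time reduction $w\mapsto\Pi_w=(\Delta_{|w|},\Omc_w,I_w,G_w)$ to the $1$-step planning problem for Horn-\SROIQ eKABs such that (i) the domain description $\Delta_{|w|}$ depends only on the \emph{length} of $w$ and has size polynomial in $|w|$; (ii) $\Omc_w$, $I_w$ and $G_w$ are computable from $w$ in polynomial time; and (iii) $w\in L$ iff $\Pi_w$ has a plan iff $\Pi_w$ has a $1$-step plan. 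I would obtain this by inspecting the standard \TwoExpTime-hardness proof for reasoning in Horn-\SROIQ~\cite{OrRS-KR10}: the TBox that uses complex role inclusions to encode the doubly-exponential address space, together with the transition machinery, has size polynomial in the simulated space bound, \ie in $|w|$, and is fixed once that length is known, while the actual input word is placed into the ABox $I_w$. Property~(iii) is then enforced by using a single action whose precondition is an ECQ that, in conjunction with $\Tmc$, holds exactly when the encoded computation accepts $w$ and whose only effect adds the goal atom, and by choosing $\Tmc$ and $I_w$ so that $I_w \apply{a} \cup \Tmc$ stays consistent; then the single-action plan is the only possible plan shape, so ``has a plan'' and ``has a $1$-step plan'' coincide.

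\textbf{The non-uniform \ExpTime algorithm.} Given such a family, I would let the advice for inputs of length $n$ be $a(n):=(f_\delta(\Delta_n),f_o(\Delta_n))$; since $\mathbf{f}$ is polynomial and $\|\Delta_n\|$ is polynomial in $n$, the length of $a(n)$ is polynomial in $n$. On input $w$ with $n=|w|$, the machine computes $\Omc_w,I_w,G_w$ and then, in polynomial time by condition~(B), $f_i(\Omc_w,I_w)$ and $f_g(\Omc_w,G_w)$, thereby assembling the PDDL task $F(\Pi_w)=(f_\delta(\Delta_n),\Omc_w\cup f_o(\Delta_n),f_i(\Omc_w,I_w),f_g(\Omc_w,G_w))$, which has size polynomial in $n$ (note that $f_i$ and $f_g$ do not take $\Delta$ as an argument, so the machine never needs $\Delta_n$ itself). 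It then decides whether $F(\Pi_w)$ has a plan of length at most $c+k$, which by Theorem~\ref{thm:poly-step-pddl-exptime} takes time exponential in $n$. This is correct by (i)--(iii) and~(A): if $w\in L$, then $\Pi_w$ has a $1$-step plan, so $F(\Pi_w)$ has a plan of length $\le c+k$; if $w\notin L$, then $\Pi_w$ has no plan, so by~(A) neither does $F(\Pi_w)$. Hence $L\in\ExpTime\poly$; since $L$ is \TwoExpTime-hard this gives $\TwoExpTime\subseteq\ExpTime\poly$, in particular $\ExpTime^{\NP}\subseteq\ExpTime\poly$, and by the known Karp--Lipton-style collapse results for the weak exponential hierarchy~\cite{GoLV-MFCS95} this forces $\ExpTime^{\NP}=\ExpTime$, contradicting the hypothesis.

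\textbf{Main obstacle.} I expect the second step to be the hard part: one must re-engineer the \TwoExpTime-hardness construction for Horn-\SROIQ so that the entire domain description (TBox, predicates and the single action) is determined by the input \emph{length} alone and is only polynomially large --- otherwise $f_\delta(\Delta_{|w|})$ cannot legitimately serve as advice --- while still faithfully encoding the hard computation inside an ECQ precondition and keeping every reachable state consistent with the TBox. The remaining ingredients (polynomial computability of $f_i,f_g$, which is given; polynomial size of $F(\Pi_w)$; the exact plan-length correspondence) are routine, and the concluding collapse is a black-box appeal to the cited complexity-theoretic results.
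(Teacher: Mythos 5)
Your proposal is correct and follows essentially the same route as the paper: a length-determined family of Horn-\SROIQ eKAB domain descriptions obtained by re-engineering the \TwoExpTime-hardness construction of \citeauthor{OrRS-KR10} (moving the word-dependent part into the state, using a single action with an ECQ precondition, and replacing the $\bot$-axioms by a queryable concept to keep states consistent), with the compiled domain description serving as polynomial advice and Theorem~\ref{thm:poly-step-pddl-exptime} supplying the \ExpTime decision procedure. The only cosmetic difference is that the paper instantiates your abstract \TwoExpTime-hard language~$L$ concretely via a universal doubly-exponential-time Turing machine on inputs $M'\#w'$, which sidesteps the (valid) closure-under-reductions argument you use to get $\TwoExpTime\subseteq\ExpTime\poly$.
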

\begin{proof}[Proof sketch]
  Let $M$ be a universal Turing machine with double-exponential time bound that can simulate all other such TMs. %\alisa{is it important that $M$ is double exp? or is it guaranteed that there is M that can encode all 2exp TM}
  %\stefan{yes, such an M exists and it needs to be a 2Exp machine}
  %
  In the appendix, we show how to construct a family of Horn-\SROIQ eKAB domain descriptions $\Delta_n$ such that $M$ accepts a word~$w$ of length~$n$ iff $(\Delta_n,\Omc,I_w,g)$ has a plan of length~$1$.
  Here, $\Omc$ contains only the two objects~$o$ and~$e$, $I_w$ is a state that can be computed from~$w$ in polynomial time, and $g$ is a nullary predicate.
  This construction is based on the \TwoExpTime-hardness proof for Horn-\SROIQ \cite{OrRS-KR10}.

  Assume now that there is a compilation scheme~$\mathbf{f}=(f_\delta,f_o,f_i,f_g)$ from Horn-\SROIQ eKABs to PDDL preserving plan size polynomially.
  This scheme could be used as an advice oracle as follows.
  Let $M'$ be a TM with double-exponential time bound. Then $M'$ accepts $w'$ iff $M$ accepts $w=M'\#w'$ (in some fixed encoding).
  Let $n$ be the size of $w$.
  The compilation of~$\Delta_n$ to a PDDL domain description~$\Delta_n'=f_\delta(\Delta_n)$ as well as $f_o(\Delta_n)$ can be used as polynomial advice for a Turing machine that, on input $w$, computes $\Omc=\{o,e\}$, $I_w$, and $(\Delta_n',\Omc\cup f_o(\Delta_n),f_i(\Omc,I_w),f_g(\Omc,g))$, which can be done in polynomial time.
  It then decides polynomial-step plan existence for this PDDL task, which can be done in \ExpTime by Theorem~\ref{thm:poly-step-pddl-exptime} and is equivalent to deciding whether $M'$ accepts~$w'$ by Definition~\ref{def:compilation}.
  Overall, this implies that $\ExpTime^\NP\subseteq\TwoExpTime$ is included in $\ExpTime\poly$, and therefore $\ExpTime^\NP=\ExpTime$ \cite{BuHo-FSTTCS92}, which contradicts the assumption of the theorem.
\end{proof}

\begin{corollary}
  Unless $\ExpTime^\NP=\ExpTime$, in general there can be no polynomial \dneg-rewritings for IQs over Horn-\SROIQ TBoxes.
\end{corollary}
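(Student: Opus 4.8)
The plan is to prove the contrapositive by recombining the non-compilability argument behind Theorem~\ref{thm:horn-sroiq-no-polynomial-compilation} with the generic compilation of Theorem~\ref{thm:generic-compilation}. Assume, towards a contradiction, that there is a uniform procedure producing, for every Horn-\SROIQ TBox~$\Tmc$ and every IQ~$p(x)$, a \dneg-rewriting~$\Rmc_{\Tmc,p}$ of size polynomial in $\|\Tmc\|+\|p(x)\|$. I would first observe that this already delivers polynomial-size rewritings for the consistency query~$\bot$: since $\bot$ denotes the empty set in every interpretation, $s,\Tmc\models\exists x.\bot(x)$ holds exactly when $s\cup\Tmc$ is inconsistent, which — whenever $\adom(s)\neq\emptyset$, as is the case throughout — is equivalent to $s,\Tmc,\{x\mapsto o\}\models\bot(x)$ for some (equivalently, every) $o\in\adom(s)$. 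Hence one can set $\Rmc_{\Tmc,\bot}:=\Rmc_{\Tmc,\bot(x)}\cup\{P_\bot\gets P_{\bot(x)}(x)\}$, where $P_{\bot(x)}$ is the goal predicate of the rewriting of the IQ~$\bot(x)$; this is still of polynomial size.

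Next I would inspect the family of Horn-\SROIQ eKAB domain descriptions~$\Delta_n$ constructed in the proof of Theorem~\ref{thm:horn-sroiq-no-polynomial-compilation} (via the \TwoExpTime-hardness construction for Horn-\SROIQ of \cite{OrRS-KR10}) and verify that every ECQ occurring in them — in the precondition and the effect conditions of its single action — is built only from instance queries and~$\bot$; this is to be expected, since \TwoExpTime-hardness of Horn-\SROIQ already holds for plain instance checking, and the goal~$g$ is a nullary closed-world atom that needs no rewriting at all. Under this assumption each $\Delta_n$ satisfies the hypothesis of Definition~\ref{def:compilation}, and all the required rewritings~$\Rmc_{\Tmc',\bot}$ and~$\Rmc_{\Tmc',Q'}$ have polynomial size. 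Applying Definition~\ref{def:compilation} to the eKABs $(\Delta_n,\Omc,I_w,g)$ then yields PDDL tasks whose domain description and extra objects are of polynomial size, with $f_i$ essentially the identity and $f_g$ only prepending $S\land\lnot P_\bot$; by Theorem~\ref{thm:generic-compilation} the construction is correct and preserves plan size exactly, hence polynomially.

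This is precisely the ingredient used inside the proof of Theorem~\ref{thm:horn-sroiq-no-polynomial-compilation}: feeding $f_\delta(\Delta_n)$ and $f_o(\Delta_n)$ in as polynomial advice, one decides in \ExpTime (by Theorem~\ref{thm:poly-step-pddl-exptime}) whether the reference machine accepts an input of length~$n$, so that $\ExpTime^\NP\subseteq\TwoExpTime$ would lie in $\ExpTime\poly$ and therefore $\ExpTime^\NP=\ExpTime$. Hence no uniform polynomial \dneg-rewriting for IQs over Horn-\SROIQ TBoxes can exist unless $\ExpTime^\NP=\ExpTime$. The one step needing genuine care is the middle one — confirming from the appendix construction that the~$\Delta_n$ really do stay within the IQ fragment — because polynomial IQ-rewritability is strictly weaker than polynomial UCQ-rewritability and only the former is assumed here; everything else is a direct composition of Theorems~\ref{thm:generic-compilation} and~\ref{thm:horn-sroiq-no-polynomial-compilation}.
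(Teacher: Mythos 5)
Your route is essentially the paper's: its entire proof of this corollary is the observation that a polynomial \dneg-rewriting for IQs would, via Theorem~\ref{thm:generic-compilation}, yield a polynomial compilation scheme preserving plan size exactly, contradicting Theorem~\ref{thm:horn-sroiq-no-polynomial-compilation}. You unpack this more carefully than the paper does --- in particular, you correctly note that the contradiction only requires compiling the specific family $\Delta_n$ from the hardness construction, and your reduction of the consistency query $\bot$ to the IQ $\bot(x)$ is sound.

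The one point where your verification claim does not match the construction as written: the single action in $\Delta_n$ has precondition $[\exists x.B(x)]$, which is a Boolean CQ, not an IQ in the paper's sense (a formula $p(x)$ with a free variable answered over $\adom(s)$). The element witnessing $B$ is an anonymous individual deep in the tree encoding the Turing machine run, so IQ-rewritability --- which by Definition~\ref{def:datalog-rewriting} only constrains answers at objects of $\adom(s)$ --- does not directly yield a rewriting of $[\exists x.B(x)]$; the argument you use for $\bot$ (inconsistency makes every instantiation entailed) is exactly what is unavailable for an ordinary concept name $B$. The repair is small and stays inside Horn-\SROIQ: add axioms $\exists r.B\sqsubseteq B$ for the roles $r$ occurring in $\Tmc_n$ (normal-form axioms of type $\exists r.C\sqsubseteq D$), so that $B$ propagates back along the tree to the named individual $o$, and replace the precondition by the ECQ $\exists x.[B(x)]$, whose embedded query is a genuine IQ and whose outer quantifier ranges over $\adom(s)$. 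With that adjustment the step you flag as ``needing genuine care'' indeed goes through; without it, the verification would fail as literally stated.
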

\begin{proof}
  By Theorem~\ref{thm:generic-compilation}, such a rewriting would yield a polynomial compilation from Horn-\SROIQ eKABs to PDDL preserving plan size exactly, which contradicts the assumption by Theorem~\ref{thm:horn-sroiq-no-polynomial-compilation}.
\end{proof}

We obtain a similar result also for the non-Horn DLs \SH and \ALCI, because for them similar \TwoExpTime-hardness proofs can be adapted \cite{Lutz-IJCAR08,ELOS-IJCAI09}.

\begin{theorem}
  \label{thm:sh-alci-no-polynomial-compilation}
  Unless $\ExpTime^\NP=\ExpTime$, there is no polynomial compilation scheme from \SH or \ALCI eKABs to PDDL preserving plan size polynomially.
\end{theorem}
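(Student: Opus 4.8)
The plan is to reuse the proof strategy of Theorem~\ref{thm:horn-sroiq-no-polynomial-compilation} essentially verbatim, replacing the Horn-\SROIQ hardness construction by analogous \TwoExpTime-hardness constructions for \SH and \ALCI. Concretely, I would first cite the known \TwoExpTime-hardness of conjunctive query entailment (or satisfiability) for \SH \cite{ELOS-IJCAI09} and for \ALCI \cite{Lutz-IJCAR08}, and observe that these proofs, like the one for Horn-\SROIQ, encode a doubly-exponentially time-bounded (alternating, or nondeterministic) Turing machine computation into a suitable reasoning problem. The first step is then to package each such proof into a family of eKAB domain descriptions $\Delta_n$ over the same two-object domain $\Omc=\{o,e\}$, with a polynomial-time-computable initial state $I_w$ encoding the input word $w$ of length $n$ and a nullary goal predicate $g$, such that the universal machine $M$ (double-exponential time bound) accepts $w$ iff the \SH (resp.\ \ALCI) eKAB task $(\Delta_n,\Omc,I_w,g)$ has a plan of length~$1$. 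This is the step I expect to carry the real weight: the cited hardness proofs produce a \emph{single} reasoning instance rather than a uniform family parameterized by input length with the input injected only through a small ABox, so one must check that the machine description and transition table go into $\Delta_n$ (depending only on $n$, via the universal-machine trick $w=M'\#w'$) while the tape contents of $w$ go into $I_w$; the one-step action simply fires once its ECQ precondition — a CQ over the TBox expressing \enquote{$M$ accepts} — is satisfied.

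Given that family, the rest is a direct transcription of the argument for Theorem~\ref{thm:horn-sroiq-no-polynomial-compilation}. Assume a polynomial compilation scheme $\mathbf{f}=(f_\delta,f_o,f_i,f_g)$ from \SH (or \ALCI) eKABs to PDDL that preserves plan size polynomially. For a double-exponentially time-bounded TM $M'$ and input $w'$, set $w=M'\#w'$ of length $n$; then $\|w\|$ is polynomial in $\|w'\|$. Use $\Delta_n'=f_\delta(\Delta_n)$ together with $f_o(\Delta_n)$ as polynomial advice (polynomial because $\mathbf{f}$ is a polynomial compilation scheme and $\|\Delta_n\|$ is polynomial in $n$). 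On input $w$, a Turing machine computes $I_w$ and the PDDL task $(\Delta_n',\Omc\cup f_o(\Delta_n),f_i(\Omc,I_w),f_g(\Omc,g))$ in polynomial time (here $f_i,f_g$ are polynomial-time by condition~(B)), and then decides polynomial-step plan existence for this PDDL task in \ExpTime by Theorem~\ref{thm:poly-step-pddl-exptime}; by condition~(A) and plan-size preservation this is equivalent to \enquote{$M'$ accepts $w'$}. Since every \TwoExpTime problem, and in particular every $\ExpTime^\NP$ problem, reduces to acceptance by such an $M'$, we get $\ExpTime^\NP\subseteq\ExpTime\poly$, hence $\ExpTime^\NP=\ExpTime$ \cite{BuHo-FSTTCS92}, contradicting the hypothesis.

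The main obstacle, as noted, is the first step: making the cited \SH and \ALCI hardness reductions \emph{uniform} and \emph{input-separable} in exactly the way the eKAB framework demands (TBox and actions fixed by $n$, input only in the ABox $I_w$, goal a single nullary atom, domain just $\{o,e\}$, plan length $1$). If a cited reduction happens to place part of the input in the TBox, one absorbs it through the universal-machine construction (the fixed machine $M$ simulates the input-dependent $M'$, so only $M$ and the length bound appear in $\Delta_n$). A secondary point to check is that the relevant query/consistency test is expressible as an ECQ precondition of a single DL action — but since CQ entailment is the canonical \TwoExpTime problem for these logics and ECQ preconditions subsume CQs, this is immediate. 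Everything downstream is mechanical and identical to the Horn-\SROIQ case, so I would keep the write-up short and point to the appendix of Theorem~\ref{thm:horn-sroiq-no-polynomial-compilation} for the shared machinery.
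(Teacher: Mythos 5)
Your overall strategy is exactly the paper's: the advice-taking argument of Theorem~\ref{thm:horn-sroiq-no-polynomial-compilation} is reused unchanged, and the only new work is turning the known \TwoExpTime-hardness reductions for CQ entailment in \ALCI and \SH into a length-uniform family $(\Delta_n,\Omc,I_w,g)$ with a single one-step action whose precondition is the CQ $[q_w]$ from the reduction (which, as the paper notes, depends only on $n$). You also correctly locate the load-bearing step, namely separating the input word from the TBox. The downstream advice-oracle argument and the conclusion $\ExpTime^\NP\subseteq\TwoExpTime\subseteq\ExpTime\poly$ match the paper verbatim.

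The one genuine gap is the mechanism you offer for that separation. You propose to \enquote{absorb} any input-dependence of the TBox \enquote{through the universal-machine construction}. That trick only fixes the \emph{machine} (its transition table and states); it does not remove the word $w=M'\#w'$ from the TBox, because in the cited reductions the word is hard-wired into axioms of the form $I\sqsubseteq\forall s^{n+2}.((G_h\sqcap(\pos=j))\to w_j)$, which depend on the symbols $w_j$ themselves and not just on $n$. Nor can an ABox over $\{o,e\}$ directly assert \enquote{cell $j$ carries $w_j$}, since the encoded tape cells are anonymous objects deep in the tree model. The paper's fix is a concrete relativization: for every position $j<n$ and every symbol $a\in\Sigma$ it adds to $\Tmc_n$ an axiom $S_{j,a}\sqsubseteq\forall s^{n+2}.((G_h\sqcap(\pos=j))\to a)$ with a fresh concept name $S_{j,a}$ --- a family depending only on $n$ and the fixed alphabet --- and then selects the actual input by asserting $S_{j,w_j}(o)$ in $I_w$ (with the analogous adaptation of the \SH reduction, where $s^{n+2}$ becomes $r^{n+1}$ and $w_j$ becomes $\forall r.(E_h\to w_j)$). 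Without this device, or an equivalent one, your $\Delta_n$ still depends on $w$ and cannot serve as length-indexed advice, so the reduction to $\ExpTime\poly$ does not go through.
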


% !TEX root = main.tex

\section{Experiments}
\label{sec:experiments}

While polynomial compilations are nice in theory, they have one major drawback:
the size of the rules and in particular the arity of the new predicates grows
polynomially with the input \cite{CaKr-IJCAI20}.
In contrast, the existing exponential compilation from Horn-\SHIQ to Datalog
uses rules of constant size and in many cases does not exhibit an exponential
blowup \cite{EOS+-AAAI12}.
Moreover, from a pragmatic perspective, it is the only Datalog rewriting for CQs
over Horn-DLs that has been implemented so far, in the \Clipper system.
We thus implemented our compilation from Section~\ref{sec:generic-compilation}
using \Clipper to answer the following
questions: 1)~Is the compilation feasible, \ie can the generated
classical planning tasks be handled by state-of-the-art planners? 2)~How does
our compilation perform against existing eKAB compilations?

%For this reason, we now evaluate the performance of the compilation from Definition~\ref{def:compilation} using the Datalog rewritings provided by \Clipper. The Horn DL benchmarks and the compiler are %available online.
%included in the supplemental material.

For the experiments, we use the Fast Downward (FD) planning
system~\cite{helmert:jair-06} version 20.06 (the newest version as of August
2021), the main implementation platform for classical planning today.
%used by easily 90\% of basic research.  and implements a vast variety of
%planning techniques. 
We ran FD with a dual-queue greedy best-first search using the $h^{\textup{FF}}$
heuristic, a commonly used baseline in the planning literature.
% The exact choice
% of the configuration plays only a minor role in our experiments.
% As
% previous experiments showed~\cite{CMPS-IJCAI16,BHKS-KR21}, the planner's
% bottleneck for the existing DL-planning benchmarks resides in processing the
% PDDL planning task input files already.
%
%The planner has been run with a commonly used baseline configuration, the same
%as it was used in the previous work on eKABs~\cite{BHKS-KR21}, in order to
%justly compare three different frameworks. %commonly accepted for SOA
%comparison in planning experiments. 
All experiments were run on a computer with an Intel Core i5-4590 CPU@3.30GHz, and run time and memory cutoffs of 600s and 8GBs, respectively.
The benchmarks and the compiler are available online.\footnote{\label{ref:supp}\url{https://gitlab.perspicuous-computing.science/a.kovtunova/pddl-horndl}}
%included in the supplementary material.\stefan{appendix?}

\paragraph{Implementation.} 
We encode Horn-\SHIQ eKAB tasks as ontology files accompanied by PDDL files whose syntax has been extended to allow conjunctive queries. The
ontology file uses Turtle syntax, which can be processed by any
off-the-shelf ontology tool.
Our compiler reads the CQ-PDDL and ontology files, and generates a classical
planning task in standard PDDL format.
The Datalog rewriting is generated with \Clipper \cite{EOS+-AAAI12}.
The compilation additionally normalizes complex conditions via
a Tseitin-like transformation~\cite{T-CL83}, which has been shown to be effective before \cite{BHKS-KR21}.

\begin{figure}
  \centering
  \includegraphics[width=0.47\textwidth]{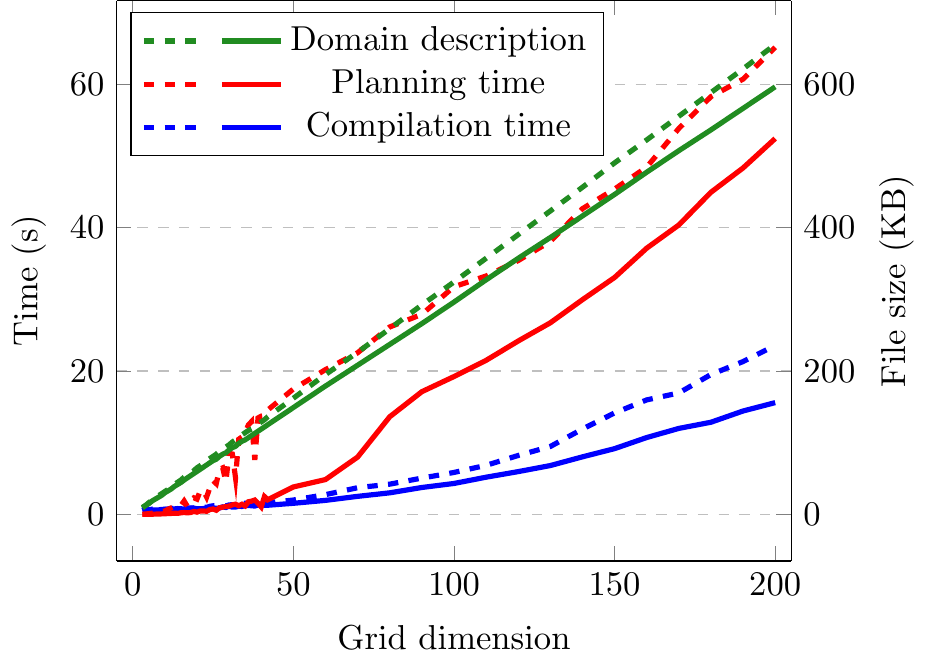}
  \caption{Comparison of the \robot (dashed) and \robotconj (solid) domains \wrt domain description size, compilation and planning times.}
  \label{fig:results}
\end{figure}

\begin{table*}
  \centering
  {
  \footnotesize
\begin{tabular}{|lr|rrr|rrr|rrr|rrr|}
\hline
& & \multicolumn{3}{c|}{\# solved} & \multicolumn{3}{c|}{\# compiled} &  \multicolumn{3}{c|}{planning time} & \multicolumn{3}{c|}{compilation time} \\
Domain & \# & \ekabcompilation & \krcompilation & \datalogcompilation & \ekabcompilation & \krcompilation & \datalogcompilation & \ekabcompilation & \krcompilation & \datalogcompilation & \ekabcompilation & \krcompilation & \datalogcompilation \\
\hline
\hline
\cats & 20 & 14 & \textbf{20} & \textbf{20} & 20 & 20 & 20 & 63.46 & 0.13 & \textbf{0.03} & \textbf{0.16} & 0.22 & 0.65 \\
\elevator & 20 & 20 & 20 & 20 & 20 & 20 & 20 & 0.36 & 0.30 & \textbf{0.03} & \textbf{0.60} & 0.73 & 0.66 \\
\robot & 20 & 4 & 12 & \textbf{20} & 12 & 12 & \textbf{20} & 15.05 & 10.10 & \textbf{0.11} & 138.52 & 138.55 & \textbf{0.75} \\
\taskassignment & 20 & 3 & \textbf{20} & \textbf{20} & 20 & 20 & 20 & 0.81 & 0.12 & \textbf{0.06} & 2.87 & 39.21 & \textbf{0.66} \\
\tpsa & 15 & 14 & 5 & \textbf{15} & \textbf{15} & 5 & \textbf{15} & 2.01 & 2.42 & \textbf{0.30} & 0.84 & 25.37 & \textbf{0.59} \\
\vta & 15 & \textbf{15} & 13 & \textbf{15} & 15 & 15 & 15 & 23.06 & 371.56 & \textbf{16.91} & \textbf{0.33} & 1.21 & 0.65 \\
\vtaroles & 15 & \textbf{15} & 5 & \textbf{15} & \textbf{15} & 5 & \textbf{15} & 2.25 & 11.61 & \textbf{1.36} & \textbf{0.59} & 95.53 & 0.66 \\
\hline
$\sum$ & 125 & 85 & 95 & \textbf{125} & 117 & 97 & \textbf{125} & 19.99 & 77.33 & \textbf{3.59} & 18.01 & 31.84 & \textbf{0.66} \\
\hline
\hline
\drones & 24 &  &  & {20} &  &  & {24} &  &  & {101.42} &  &  & {0.69} \\
\queens & 30 &  &  & {15} &  &  & {30} &  &  & {21.66} &  &  & {0.69} \\
\robotconj & 56 &  &  & {56} &  &  & {56} &  &  & {8.14} &  &  & {2.77} \\
\hline
$\sum$ & 110 &  &  & {91} &  &  & {110} &  &  & {30.87} &  &  & {1.75} \\
\hline
\end{tabular}

  }
  \caption{Per-domain aggregated statistics: ``\# solved'' number of instances
  solved by the planner;
  % ``\# pre-processed'' number of instances for which the planner could
  % successfully process the generated PDDL files;
  ``\# compiled'' number of instances for which the 
  compilers could generate the PDDL input files for the planner;
  ``planning time'' average planner run
  time over the commonly solved instances;
  ``compilation time'' average time of generating the PDDL
  files.}\label{tab:all}
\end{table*}

%56 robotConj
%20 cats
%24 drones
%20 elevator
%30 queens
%20 robot
%20 task
%15 tpsa
%15 vta
%15 vta-roles
\paragraph{Benchmarks.}

Our benchmark collection consists of $125$ instances adapted from existing
DL-Lite eKAB benchmarks \cite{CMPS-IJCAI16,BHKS-KR21}, and $110$ newly created instances.
A detailed description can be found in the appendix.
We manually translated the existing eKAB domains (\cats, \elevator, \robot, \taskassignment, \tpsa, \vta, and \vtaroles) into the format described above. Modifications almost exclusively pertained to extracting the ontology
from the eKAB description into a separate Turtle file and moving so-called condition-action rules \cite{CMPS-IJCAI16} into action preconditions. The translated instances are equivalent to the originals.
% We adapted all existing eKAB benchmarks in this manner: .

Since Horn-\SHIQ is more expressive than DL-Lite, we also created $3$ new domains (\drones, \queens, and \robotconj)
in which we make use of conjunctions, qualified existential restrictions occurring with negative polarity, and symmetric and transitive relations, all of which are not supported by DL-Lite \cite{BHLS-17}.
\drones describes a complex 2D drone navigation problem, in which drones need to be
moved to avoid critical situations; the latter are described in the
ontology using axioms with qualified existential
restrictions and symmetric relations.
\queens generalizes the eight queens
puzzle to board sizes $n\in\{5,\dots, 10\}$ and numbers of queens
$m\in\{n-4,\dots, n\}$. Queens are initially placed randomly on the board and need to be moved to a configuration where no queen threatens another. The ontology contains a symmetric, transitive relation to
describe legal moves.
\robotconj is a redesign of \robot that moves some of the complexity from actions into the ontology. The original
\robot benchmark encodes static knowledge about 2D grid cell adjacency in the
action descriptions, which can be encoded much more
naturally in the ontology using conjunctions.
Note that the original \robot benchmark consists of 20 instances (grid sizes $3\times 3$ up to $22\times 22$), whereas for the new \robotconj we included 56 instances (up to $200\times 200$) since they could be easily handled by our compiler.

\paragraph{Scalability Study.}

We use \robot and \robotconj to analyze how our compilation performs as
a function of domain description size (including the ontology).
Figure~\ref{fig:results} depicts the results
for $56$ instances of each domain, obtained by scaling the grid from $3\times 3$
to $200 \times 200$.
In both domains, the file size is directly
proportional to size of the grid.
Even the largest tested
instance could be compiled and solved in less than~$90$ seconds, attesting the
feasibility of our approach.
The increased complexity of \robotconj's ontology does not
affect the performance. On the contrary, both compilation and planning for \robotconj are actually consistently faster than for \robot, due to the simplified actions.

% In  we have analyzed differences in \robot and \robotconj performances caused by our eKAB design improvement. 
% For grid sizes scaling from $3\times 3$ to $200 \times 200$ cells (the horizontal axis), on $56$ data points, \robotconj shows better results with respect to ontology and domain sizes (red lines), compilation time (blue) as well as planner run time (green). 
% %\stefan{TODO: graph of ontology size vs.\ compilation time (Robot or RobotConj benchmark), to show scaling behavior (at least on one example)}

\paragraph{Comparison to DL-Lite Compilations.}

We compare to \ekabcompilation, the original DL-Lite eKAB compiler
\cite{CMPS-IJCAI16}, and to \krcompilation, its recently introduced optimization using derived predicates to compile away complex formulas
\cite{BHKS-KR21}.  We refer to our compilation by
\datalogcompilation. Table~\ref{tab:all} gives a summary of the results. 
\ekabcompilation and \krcompilation were only run on the original DL-Lite eKAB
benchmarks. %, but do not support the three new Horn-\SHIQ eKAB domains.

% After compilation, we count how many of the PDDL translations can be solved by the planner within the resource limits. For the commonly solved problems, we record an average of the planning time, \ie  time that the planner takes to preprocess and solve a problem. This metric can help to judge the quality of the produced PDDL files: well-formulated problem instances should take less time to be solved by the planner.

 %Since we use FD with the same configuration as in~\cite{BHKS-KR21}, the
 %columns \enquote{\ekabcompilation} and \enquote{\krcompilation} confirm the
 %results obtained in~\cite{BHKS-KR21}.
Considering the DL-Lite benchmark part, {\datalogcompilation} has similar or
better performance than {\ekabcompilation} and {\krcompilation}. In \robot, the
\ekabcompilation compilation (and hence also \krcompilation) could only process the
$12$ smallest instances (up to grid size $14\times 14$), exceeding the $600$ seconds time
limit thereafter.
% In \vta and \vtaroles, \krcompilation's optimization turned
% out detrimental.
While \ekabcompilation and \krcompilation both show a blow-up
in compilation time in some domains, our new \datalogcompilation compiler could
process all instances in less than $1$ second on average. The compilation time
of \datalogcompilation is almost consistently larger than $0.6$ seconds, which
can be attributed to the fixed overhead of calling \Clipper.
While the compilation time of \datalogcompilation is very competitive with the previous DL-Lite compilers, the planner's performance statistics
really substantiate this advantage.
Regarding planning time and the number of solved instances, {\datalogcompilation} significantly outperforms both alternatives on the DL-Lite benchmarks. 

%\stefan{Old benchmarks: Robot, TaskAssign, Cats, Elevator, TPSA, VTA, VTA-Roles
%(our approach has similar or better performance than the previous DL-Lite
%approach \cite{CMPS-IJCAI16} and our previous pre-compilation
%\cite{BHKS-KR21})}
The more complex ontologies in the Horn-\SHIQ benchmark part did not pose
a challenge to \datalogcompilation. All instances could still be translated
within $3$ seconds on average. The constructed instances of \drones and \queens are
however much more challenging from a planning perspective. Contrary to the
DL-Lite benchmarks before, average planning runtime is higher, and some instances
could not be solved in time.  The difficulty of the instances was chosen
intentionally, with the purpose of creating challenging problems for future
work.

%The domains are created in such a way that some starting conditions allow for a fast and trivial solution. In order to   for these domains involve randomization; solving time depends on the precise starting conditions  

%\stefan{New benchmarks: RobotConj (variant of the Robot tasks with a more expressive ontology, but less complex actions; much faster although solving the same task), Drone (planning task on a 2D grid loosely inspired by the E2 drone simulator; small but expressive ontology; compilation and preprocessing very fast, but solving time depends on the precise starting conditions), Queens}

\section{Conclusion}
\label{sec:conclusion}

We have shown that adding Horn-DL background ontologies often does not increase the expressivity of PDDL planning tasks.
This is due to \dneg-rewritability, which allows us to reduce open-world to closed-world reasoning.
However, adding more axiom types (Horn-\SROIQ) or using non-Horn DLs (\SH or \ALCI) increases the expressivity beyond PDDL, unless the weak exponential hierarchy collapses.
An evaluation of our generic compilation approach using the \Clipper system demonstrates the feasibility of using \dneg-rewritings, even compared to more specialized compilation schemes for the smaller logic DL-Lite.
Moreover, we have contributed additional benchmarks to showcase the increased expressivity of our proposed approach.

In future work, we will investigate the existence of a polynomial compilation scheme for Horn-\SHOIQ, whose expressivity lies between that of Horn-\ALCHOIQ and Horn-\SROIQ.
We also want to investigate planning formalisms with different effect semantics, \eg the one described by \citeauthor{DORS-JAIR21} \shortcite{DORS-JAIR21}.

% explanations for CQ entailment (\eg why a (simple) precondition is satisfied in a given state):

% 1. apply Clipper + ProvSQL to point out justifications (subsets of the state that are responsible); can potentially be extended to ECQs since ProvSQL supports negation etc.; works only for Horn-SHIQ TBoxes that have non-recursive Datalog rewritings!

% 2. apply Datalog(S) rewriting of rules for Horn-ALCHIQ from Carral+Krötzsch (IJCAI'20) + rule tracing in RuleWerk/Vlog to get full proofs (or just justifications including responsible TBox axioms); needs to be separate from PDDL rewriting since derived predicates do not support Datalog(S) rules; can maybe support also ECQs except for $\forall$; works for arbitrary Horn-ALCHIQ TBoxes

% needs to be combined with some form of explanations for plans!

\section*{Acknowledgements}
This work is supported by DFG grant 389792660 as part of TRR~248 -- CPEC 
(\url{https://perspicuous-computing.science}).
% \stefan{others?}

\bibliography{bibliography}

\ifdefined\notrflagdefined
\else
\clearpage
\appendix
\onecolumn
\section{Proof of Theorem~\ref{thm:polynomial-compilation}}
\label{sec:app-rewriting}

We first describe the logic Horn-\ALCHOIQ in more detail.
We use classical DL terms here, \ie unary predicates are called \emph{concept names}, binary predicates are \emph{role names}, objects/constants are \emph{individual names} and states are \emph{ABoxes}.
We assume that all axioms in Horn-\ALCHOIQ TBoxes are in \emph{normal form} \cite{CaDK-KR18}, \ie they have one of the following shapes, where $C,C_1,\dots,C_n,D$ are concept names, $r,s$ are role names, and $a$ is an individual name:
\begin{enumerate}[label=(\roman*),leftmargin=*]
  \item $C_1\sqcap\dots\sqcap C_n\sqsubseteq D$ \label{ax:conj}
  \item $C\sqsubseteq\exists r.D$ \label{ax:ex-r}
  \item $\exists r.C\sqsubseteq D$ \label{ax:ex-l}
  \item $C\sqsubseteq {\le 1}r.D$ \label{ax:func}
  \item $C\sqsubseteq\{a\}$ \label{ax:nom-r}
  \item $r\sqsubseteq s$ \label{ax:rh}
  \item $\{a\}\sqsubseteq C$ \label{ax:nom-l}
\end{enumerate}
We added the last axiom type here since our goal is a TBox rewriting that is independent of the ABox, \ie we need to distinguish the TBox axiom $\{a\}\sqsubseteq C$ from the (equivalent) ABox fact $C(a)$.
A first-order interpretation~\Imc satisfies these axioms if it satisfies the sentences
\begin{enumerate}[label=(\roman*),leftmargin=*]
  \item $\forall x. C_1(x)\land\dots\land C_n(x)\to D(x)$,
  \item $\forall x. C(x)\to\exists y. r(x,y) \land D(y)$,
  \item $\forall x,y. r(x,y)\land C(y)\to D(x)$,
  \item $\forall x,y,z. C(x)\land r(x,y)\land D(y)\land r(x,z)\land D(z)\to y=z$,
  \item $\forall x. C(x)\to x=a$,
  \item $\forall x,y. r(x,y)\to s(x,y)$, or
  \item $C(a)$,
\end{enumerate}
respectively.
Additionally, there is a bijective and irreflexive function $\cdot^-$ on the set of role names such that $r^{--}=r$ and $\forall x,y.r(x,y)\leftrightarrow (r^-)(y,x)$ is required to hold in all models of a TBox, for all role names~$r$ ($r^-$ is called the \emph{inverse} of~$r$).

Following \citeauthor{OrRS-KR10} \shortcite{OrRS-KR10}, we will use \emph{\dsneg} rules to encode reasoning in Horn-\ALCHOIQ.
\dsneg extends \dneg rules by introducing fixed \emph{set sorts} $2^C$, where $C$ is a set of constants. Set terms of sort $2^C$ are built inductively from the constructors $\{c_1,\dots,c_n\}$ and $t_1\cup t_2$, where $c_1,\dots,c_n\in C$ and $t_1,t_2$ are set terms of this sort.
Every predicate has an associated \emph{sort function} that assigns to each position a unique sort, \ie either the (normal) \emph{element sort}, or one of the fixed set sorts.
This means that the positions of this predicates always accept only terms of the associated sort.
The semantics of the resulting (stratified) \dsneg rules is intuitive \cite{OrRS-KR10}.
%
% \cite{OrRS-KR10} also show that the subset relation $t_1\subseteq t_2$ between set terms is definable.

To make it easier to compare with the existing constructions \cite{OrRS-KR10,CaDK-KR18}, in the following we write \dsneg rules with $\to$ instead of $\gets$.
%
% Without loss of generality, we <-- WLG passt hier nicht
We also use rules with conjunctions of atoms in the head (instead of only one atom).

\subsection{Encoding Instance Queries}

In the following, let \Tmc be a Horn-\ALCHOIQ TBox in normal form, and \NC/\NR/\NI denote the sets of concept names/role names/individual names in \Tmc.
In addition to the individuals in~\NI, the construction by \citeauthor{CaDK-KR18} \shortcite{CaDK-KR18} uses artificial individuals of the form~$t_X$, where $X$ is a set of concept names, and new predicates $R$ that represent a set of role names that have to be satisfied at the same time. We represent such~$R$ and~$X$ using sets of the sorts $2^{\NR}$ and $2^{\NC\cup\NI}$, respectively. The latter includes named individuals since we want to treat named and anonymous individuals using the same predicates. However, named individuals~$a$ will only appear as singleton sets~$\{a\}$.
Formally, we are not allowed to treat individuals from the ABox in this way, because then the set sorts (which need to be fixed) would depend on the ABox.
We nevertheless do this in the following and at the end of this section describe how to translate these \dsneg rules into \dneg rules that are ABox-independent.

Instead of an atom $C(x)$ \cite{CaDK-KR18}, we now use atoms of the form $\concept(C,X)$, where $C\in\NC$ is treated as a new constant and $X$ is a set as described above (\ie either a set of concept names or a singleton set containing an individual name). Likewise, role atoms $r(x,y)$ and $R(x,y)$ are transformed into $\role(r,X,Y)$ and $\roles(R,X,Y)$, respectively.
Additionally, we distinguish sets $X\subseteq\NC$ from sets $\{a\}$ with $a\in\NI$ by the predicate \anon, which is populated by the following rules, for all $C\in\NC$:
\begin{align}
&\pto \anon(\{C\})\\
\anon(X) &\to \anon(X\cup\{C\})
\end{align}
We also compute the set of inverse roles of a set~$R$, for all $r\in\NR$ \cite{OrRS-KR10}:
\begin{align}
  &\pto \inv(\{r\},\{r^-\})\\
  \inv(R,R^-) &\to \inv(R\cup\{r\},R^-\cup\{r^-\})
\end{align}
Additionally, the original rewriting uses atoms of the form $\n(x)$ and $x\approx y$, which we simply adapt to $\n(X)$ and $X\approx Y$, where $X,Y$ are as described above.
We also add a predicate \ind, which identifies all individual names from the ABox in order to identify query answers in the end.
This predicate represents a subset of the predicate \n, which will also contain anonymous individuals~$X$ that are inferred to represent a unique element in every interpretation.

The original Datalog rewriting starts with several auxiliary rules, which we translate below into their modified \dsneg form, for all $C\in\NC$ and $r\in\NR$ \cite[Figure~2]{CaDK-KR18}:
\begin{align}
  \concept(C,X) &\to \concept(\top,X) \\
  \role(r,X,Y) &\to \concept(\top,X) \land \concept(\top,Y) \displaybreak[0] \\
  \roles(R,X,Y)\land\n(Y)\land\inv(R,R^-) &\to \roles(R^-,Y,X) \\
  \roles(R,X,Y)\land r\in R &\to \role(r,X,Y) \displaybreak[0] \\
  C(x) &\to \concept(C,\{x\})\land\ind(\{x\}) \label{rule:ca} \\
  r(x,y) &\to \role(r,\{x\},\{y\})\land\ind(\{x\})\land\ind(\{y\}) \label{rule:ra} \\
  \ind(X) &\to \n(X) \displaybreak[0] \\
  X\approx Y &\to Y\approx X \\
  X\approx Y \land Y\approx Z &\to X\approx Z \displaybreak[0] \\
  \concept(C,X) \land X\approx Y &\to \concept(C,Y) \\
  \n(X) \land X\approx Y &\to \n(Y) \\
  \roles(R,X,Y)\land X\approx Z &\to \roles(R,Z,Y) \\
  \roles(R,X,Y)\land Y\approx Z &\to \roles(R,X,Z)
\end{align}
Rules~\eqref{rule:ca} and~\eqref{rule:ra} were further modified from their originals to convert the ABox predicates into our \dsneg syntax in an ABox-independent way. In a slight abuse of notation, on the left-hand side of Rule~\eqref{rule:ca}, $C$ is treated as a concept name, and on the right-hand side $C$ is treated as a constant (similarly for Rule~\eqref{rule:ra}).

The axioms of the TBox~\Tmc are translated into the following \dsneg rules \cite[Figures~3 and~4]{CaDK-KR18}.
For every axiom of the form~\ref{ax:conj}:
\begin{align}
  \concept(C_1,X)\land\dots\land\concept(C_n,X)&\to\concept(D,X)
\intertext{For axioms of the form~\ref{ax:ex-r}:}
  \concept(C,X)&\to\role(r,X,\{D\}) \label{rule:c-exists-r-d}
\intertext{For axiom type~\ref{ax:ex-l}:}
  \role(r,X,Y)\land\concept(C,Y) &\to \concept(D,X) \\
  \concept(C,X)\land \roles(R^-,X,Y)\land%{} & \notag \\
  r\in R\land\inv(R,R^-)\land\anon(Y) &\to \roles(R^-,X,Y\cup\{D\}) \label{rule:exists-inv}
\intertext{Axiom type~\ref{ax:func} requires more complex rules:}
  \concept(D,Y)\land\role(r^-,Y,X)\land%{} & \notag \\
    {} & \notag \\
    \concept(C,X)\land\role(r,X,Z)\land\concept(D,Z)\land\n(Z) &\to Y\approx Z \displaybreak[0] \\
  \concept(C,X)\land r\in R\land\roles(R,X,Y)\land%{} & \notag \\
    \concept(D,Y)\land{} & \notag \\
    \anon(Y)\land r\in S\land\roles(S,X,Z)\land\concept(D,Z)\land\anon(Z) &\to \roles(R\cup S,X,Y\cup Z) \displaybreak[0] \label{rule:func-merge} \\
  \concept(C,X)\land r\in R\land\inv(R,R^-)\land%{} & \notag \\
    \concept(D,Y)\land \roles(R^-,Y,X)\land{} & \notag \\
    r\in S\land\inv(S,S^-)\land\anon(Z)\land E\in Z\land%{} & \notag \\
    \roles(S,X,Z)\land\concept(D,Z) &\to \concept(E,Y) \land{} \notag \\
    &\pto \roles(R^-\cup S^-,Y,X) \displaybreak[0] \\
  \concept(D,Y)\land\role(r^-,Y,X)\land%{} & \notag \\
    \concept(C,X)\land\n(X) &\to \n(Y)
\intertext{In addition, we need the following rule that completes the translation of axioms of types \ref{ax:ex-r}--\ref{ax:func} by adding the newly created anonymous individuals to the required concepts:}
  \role(r,X,Y)\land C\in Y &\to \concept(C,Y)
\intertext{For axiom type~\ref{ax:nom-r}:}
  \concept(C,X) &\to \{a\}\approx X \land \n(\{a\})
\intertext{For axiom type~\ref{ax:rh}, we use the predicate \sup that connects each role name to the set of all its super-roles (Rule~\ref{rule:rh} is instantiated for every $s\sqsubseteq t\in\Tmc$ or $s^-\sqsubseteq t^-\in\Tmc$):}
  &\to \sup(r,\{r\}) \\
  \sup(r,S)\land s\in S &\to \sup(r,S\cup\{t\}) \label{rule:rh} \\
  \role(r,X,Y)\land\sup(r,S) &\to \roles(S,X,Y) \\
  \role(r^-,X,Y)\land\sup(r,S)\land\inv(S,S^-) &\to \roles(S^-,X,Y)
\intertext{Finally, axiom type~\ref{ax:nom-l} can be handled like a concept assertion:}
  &\to \concept(C,\{a\})\land\n(\{a\})
\end{align}

So far, the rules did not use negation and can be seen as the first stratum of the final rule set.
This part is already a rewriting of any instance query over the TBox signature, \ie $\concept(C,\{a\})$ is contained in the least Herbrand model of these rules and an ABox iff $C(a)$ is entailed by the original TBox over the ABox \cite[Lemma~4]{CaDK-KR18}.
To be able to answer arbitrary UCQs, we also need to encode the so-called \emph{filtration phase} \cite{CaDK-KR18}.

\subsection{Building a Canonical Model}

The next stratum encodes Definition~7 from \citeauthor{CaDK-KR18} \shortcite{CaDK-KR18} by using negated atoms over the predicate~\n.
The goal of this part is to derive a dependency relation $\role_\tr(r,X,Y)$ that encodes the order in which individuals are created during the construction of a model of the ontology.
Extended individuals of the form $t_{R,X}^i$ are used in this relation, where $R\subseteq\NR$, $X\subseteq\NC$ and $i\in\{0,1,2\}$ \cite{CaDK-KR18}.
Therefore, the sets $X,Y$ in $\role_\tr(r,X,Y)$ are now considered to be subsets of $\NR\cup\NC\cup\NI\cup\{0,1,2\}$, where again individuals can only occur in singleton sets~$\{a\}$, and at most one index~$0,1,2$ can be present in any given set.
In addition to $\role_\tr$, the following rules also compute extensions $\role'$ and $\concept'$ of $\role$ and $\concept$, respectively, to the new individuals.
Together, these three predicates describe a kind of \emph{canonical model} (called~$\Cmc_\Omc$) over which UCQs will be answered.

As a prerequisite, we need to introduce an intermediate stratum to define a total order~$\preceq$ on sets of the form $R\cup X$.
For this purpose, we consider an enumeration $r_1,\dots,r_n,C_{n+1},\dots,C_{n+m}$ of all role and concept names and use the following rules to define the lexicographic order~$\preceq$ based on the auxiliary relations~$\prec_i$ and~$\approx_i$, $i\in\{1,\dots,n+m\}$ (all using infix notation):
\begin{align}
  &\pto \emptyset\approx_1\emptyset \\
  &\pto \emptyset\prec_1\{r_1\} \\
  &\pto \{r_1\}\approx_1\{r_1\} \displaybreak[0] \\
  X\prec_1 Y &\to X\prec_2 Y \\
  X\approx_1 Y &\to X\approx_2 Y \displaybreak[0] \\
  X\approx_1 Y &\to X\prec_2 Y\cup\{r_2\} \\
  X\approx_1 Y &\to X\cup\{r_2\}\approx_2 Y\cup\{r_2\} \\
  &\ \ \vdots \notag \\
  X\approx_{n+m}Y &\to X\preceq Y \\
  X\prec_{n+m}Y &\to X\preceq Y
\end{align}

The following rules, which use the negations of~$\preceq$ and~\n from the previous strata, correspond to Definition~7 from \citeauthor{CaDK-KR18} \shortcite{CaDK-KR18}, where $j=(i+1)\mod 3$:%
\begin{align}
  \n(X)\land\role(r,X,Y)\land\n(Y) &\to \role_\tr(r,X,Y)\land\role_\tr(r^-,Y,X) \\
  \n(X)\land\roles(R,X,Y)\land%{} & \notag \\
    \anon(Y)\land\lnot\n(Y)\land r\in R &\to\role_\tr(r,X,R\cup Y\cup\{0\}) \displaybreak[0] \label{rule:co-init} \\
  \role_\tr(r,X,R\cup Y\cup\{i\})\land%{} & \notag\\
    \roles(S,Y,Z)\land\anon(Z)\land\lnot\n(Z)\land{} & \notag \\
    s\in S\land R\cup Y\preceq S\cup Z &\to \role_\tr(s,R\cup Y\cup\{i\},S\cup Z\cup\{j\}) \\
  \role_\tr(r,X,R\cup Y\cup\{i\})\land%{} & \notag \\
    \roles(S,Y,Z)\land\anon(Z)\land\lnot\n(Z)\land{} & \notag \\
    s\in S\land R\cup Y\not\preceq S\cup Z &\to \role_\tr(s,R\cup Y\cup\{i\},S\cup Z\cup\{i\}) \\
  \role_\tr(s,X,R\cup Y\cup\{i\})\land%{} & \notag \\
    \role(r,Y,Z)\land\n(Z) &\to \role_\tr(r,R\cup Y\cup\{i\},Z)\land{} \\
    &\pto \role_\tr(r^-,Z,R\cup Y\cup\{i\}) \displaybreak[0] \\
  \role_\tr(s,X,Y)\land\concept(C,Y) &\to \concept'(C,Y) \\
  \role_\tr(s,X,R\cup Y\cup\{i\})\land%{} & \notag \\
    \concept(C,Y) &\to \concept'(C,R\cup Y\cup\{i\}) \\
  \role_\tr(r,X,Y) &\to \role'(r,X,Y)\land\role'(r^-,Y,X)
\end{align}
The least Herbrand model of these rules (restricted to $\role_\tr$, $\concept'$, and $\role'$) corresponds to the set~$\Cmc_\Omc$ \cite{CaDK-KR18}.
Conditions of the type \enquote{$t_{R,Y}^i$ is in~$\Cmc_\Omc$} are translated into body atoms like $\role_\tr(r,X,R\cup Y\cup\{i\})$ since these new individuals are only introduced into~$\Cmc_\Omc$ inside of $\role_\tr$-facts.

\subsection{Encoding the Filtration}

Finally, we encode Definition~8 from \citeauthor{CaDK-KR18} \shortcite{CaDK-KR18}, which constructs a family of graphs that use the variables of a given CQ as vertices, and their edges encode possible matches of the CQ into~$\Cmc_\Omc$.
Some of these matches have to be filtered out since they lead to spurious answers.
We assume that the input CQ~$q$ is of the form $\exists v_1,\dots,v_\ell.\phi(v_1,\dots,v_k)$, \ie $v_{\ell+1},\dots,v_k$ are the free variables (UCQs can be treated by encoding each component CQ individually and then merging the results in a single predicate).
By $\phi(V_1,\dots,V_k)$ we denote the result of replacing each concept atom $C(v_i)$ in~$\phi$ by~$\concept'(C,V_i)$ and similarly $r(v_i,v_j)$ by $\role'(r,V_i,V_j)$, where each~$V_i$ is viewed as a set variable over $\NR\cup\NC\cup\{i\}$ and denotes a possible mapping of~$v_i$ into~$\Cmc_\Omc$.
We use the indices $1,\dots,k$ to refer to the vertices in the graphs that are constructed, and atoms of the form $\edge(i,j,V_1,\dots,V_k)$ to denote an edge from~$i$ to~$j$ (which depends on a specific instantiation of all variables by individuals in~$\Cmc_\Omc$).

The following are the rules corresponding to the first part of Definition~8 from \citeauthor{CaDK-KR18} \shortcite{CaDK-KR18}, for all role atoms $r(v_i,v_j)$ in~$\phi$:
\begin{align}
  \phi(V_1,\dots,V_k)\land\role_\tr(r,V_i,V_j)\land\lnot\role_\tr(r^-,V_j,V_i) &\to \edge(i,j,V_1,\dots,V_k) \\
  \phi(V_1,\dots,V_k)\land\role_\tr(r^-,V_j,V_i)\land\lnot\role_\tr(r,V_i,V_j) &\to \edge(j,i,V_1,\dots,V_k)
\intertext{Now, for all $i,j\in\{1,\dots,k\}$, we apply the following rules to collapse this graph according to Definition~8 from \citeauthor{CaDK-KR18} \shortcite{CaDK-KR18}:}
  \edge(i,j,V_1,\dots,V_k)\land\edge(m,j,V_1,\dots,V_k) &\to \equal(i,m,V_1,\dots,V_k) \\
  \edge(i,j,V_1,\dots,V_k)\land\edge(m,n,V_1,\dots,V_k)\land%{} & \notag \\
    \equal(j,n,V_1,\dots,V_k) &\to \equal(i,m,V_1,\dots,V_k)
\intertext{We now need to check whether the resulting graph is a rooted directed forest, \ie contains no cycles and no \enquote{diamonds} that reconnect different branches:}
  \edge(i,j,V_1,\dots,V_k) &\to \reach(i,j,V_1,\dots,V_k) \\
  \reach(i,j,V_1,\dots,V_k) \land \equal(j,m,V_1,\dots,V_k) &\to \reach(i,m,V_1,\dots,V_k) \\
  \reach(i,j,V_1,\dots,V_k) \land \equal(i,m,V_1,\dots,V_k) &\to \reach(m,j,V_1,\dots,V_k) \displaybreak[0] \\
  \reach(i,j,V_1,\dots,V_k) \land \edge(j,m,V_1,\dots,V_k) &\to \reach(i,m,V_1,\dots,V_k) \\
  \reach(i,i,V_1,\dots,V_k) &\to \bad(V_1,\dots,V_k) \\
  \edge(i,j,V_1,\dots,V_k)\land\edge(i,m,V_1,\dots,V_k)\land%{} & \notag \\
    \lnot\equal(j,m,V_1,\dots,V_k)\land{} & \notag \\
    \reach(j,n,V_1,\dots,V_k)\land\reach(m,n,V_1,\dots,V_k) &\to \bad(V_1,\dots,V_k)
\intertext{Finally, we can use the predicate \bad to filter out spurious matches and return the actual answers to~$q$ in the predicate~$P_q$:}
  \phi(V_1,\dots,V_k)\land\lnot\bad(V_1,\dots,V_k) &\to P_q'(V_{\ell+1},\dots,V_k) \\
  P_q'(V_{\ell+1},\dots,V_k)\land%{} & \notag \\
    \ind(V_{\ell+1})\land\dots\land\ind(V_k)\land%{} & \notag \\
    a_{\ell+1}\in V_{\ell+1}\land\dots\land a_k\in V_k &\to P_q(a_{\ell+1},\dots,a_k)
\end{align}

\subsection{From \dsneg to \dneg}

To simulate set terms in plain \dneg, we adapt an existing construction, which did not deal with negation or ABox-independent rule sets \cite{OrRS-KR10}.
First, each set union~$t_1\cup t_2$ over the domain~$S$ is replaced with a fresh variable~$X$ and the ternary atom~$\U_S(t_1,t_2,X)$ is added to the body of the rule in which this term occurs.
New rules are added to simulate the set union with this predicate (see below).
Then, sets~$X\subseteq S$ are represented as bit vectors of length~$|S|$ and set variables as vectors of variables, and all atoms are replaced accordingly.
This gets rid of all set expressions while increasing the arity of the predicates polynomially.

The main problem we face here is that we used singleton sets $\{a\}$ to refer to individual names~$a$ from the ABox, which means that the set sort $2^{\NC\cup\NI}$ was treated as if it contained all these individual names, although our translation cannot depend on the ABox (see Definition~\ref{def:datalog-rewriting}).
To avoid this issue, we modify the bit vector encoding above to directly represent constants by themselves.
For this, recall that individual names~$a$ can only occur in singleton sets~$\{a\}$, because sets~$X$ are only extended if they belong to~\anon (see, \eg Rules~\eqref{rule:exists-inv}, \eqref{rule:func-merge}, or \eqref{rule:co-init}).
Thus, we can represent each instantiated set~$X$, which is either of the form~$\{a\}$ or a subset of~\NC, as a vector $(a,0,\dots,0)$ or $(0,b_1,\dots,b_m)$, respectively, where $m=|\NC|$ and $b_i=1$ iff the $i$-th concept name of~\NC is in~$X$ (according to some fixed enumeration of~\NC).
The new constants~$0$ and~$1$ are used to represent bit values.
Correspondingly, set variables~$X$ are split into vectors $(x_0,x_1,\dots,x_m)$, where $x_0$ holds the individual name (if any) and $x_1,\dots,x_m$ represent a subset of~\NC.
The encoding works similarly for sets of the sorts $2^\NR$, $2^{\NR\cup\NC}$, and $2^{\NR\cup\NC\cup\NI\cup\{0,1,2\}}$ that are employed in the rewriting above.
For example, the \dneg versions of Rules~\eqref{rule:ca} and~\eqref{rule:c-exists-r-d} are
\begin{align}
  C(x) &\to \concept(C,x,0,\dots,0) \land \ind(x,0,\dots,0), \text{ and} \\
  \concept(C,x_0,\dots,x_m) &\to \role(r,x_0,\dots,x_m,0,b_1,\dots,b_m),
\end{align}
respectively, where $b_i=1$ iff $D$ is the $i$-th concept name in~\NC.

Apart from the new predicates like~$\U_{\NC\cup\NI}$, this encoding clearly preserves the stratification of the original rule set.
The following additional rules are used to define~$\U_{\NC\cup\NI}$, and similarly for the other set sorts:
\begin{align}
  &\to \max(0,0,0) \\
  &\to \max(1,0,1) \\
  &\to \max(0,1,1) \\
  &\to \max(1,1,1) \\
  \max(x_1,y_1,z_1)\land\dots\land\max(x_m,y_m,z_m) &\to \U_{\NC\cup\NI}(0,x_1,\dots,x_m,0,y_1,\dots,y_m,0,z_1,\dots,z_m),
\end{align}
This suffices to define the set union since we never need to compute unions involving singleton sets~$\{a\}$ with $a\in\NI$.
These additional rules can be included in the first stratum since \max and~$\U_S$ do not occur in any other rule heads.

This finishes the presentation of the \dneg rewriting for any UCQ over a Horn-\ALCHOIQ TBox.
Its correctness follows mainly from an existing result \cite[Theorem~3]{CaDK-KR18} since we only translated the relevant definitions into \dsneg rules.
It can also be verified that the resulting set of \dneg rules is of polynomial size.

\section{Proof of Theorem~\ref{thm:horn-sroiq-no-polynomial-compilation}}
\label{sec:app-horn-sroiq-no-polynomial-compilation}

We show how to construct the eKAB task $(\Delta_n,\Omc,I_w,g)$ such that $M$ accepts a word~$w$ of length~$n$ iff there is a plan of length~$1$.
The construction is based on the \TwoExpTime-hardness proof for Horn-\SROIQ \cite{OrRS-KR10} and uses only a single action with precondition $[\exists x.B(x)]$ and unconditional effect~$g$.
We do not repeat all details of the original construction here, but only adapt the relevant parts.
The proof encodes the Turing machine~$M$ and an input word~$w$ into a TBox~\Tmc using the two objects~$o$ and~$e$ such that~$M$ accepts~$w$ iff at least one unary predicate $H_{q_f}$ (representing a final state of the TM) is empty in every model of~\Tmc.
The original proof then goes on to add axioms $H_{q_f}\sqsubseteq\bot$ to force \Tmc to become unsatisfiable in such a case.
Since we require the initial state to be consistent with the TBox, we instead use the axioms $H_{q_f}\sqsubseteq B$, which allows us to query for $\exists x.B(x)$ instead of checking unsatisfiability.

We further adapt the original reduction by extracting from~\Tmc the description of the input word~$w$ into a state~$I_w$.
For $w=w_0\dots w_{n-1}$, \Tmc contains the following axioms to encode the input tape (notation is slightly adjusted to avoid clashes):
\begin{align}
  \{o\} & \sqsubseteq I_1\sqcap H_{q_0} \label{ax:tm-zero} \\
  I_j &\sqsubseteq A_{w_j}\sqcap\forall h.I_{j+1} & (0\le j<n) \label{ax:tm-j} \displaybreak[0] \\
  I_j &\sqsubseteq \overline{H_r} & (1\le j<n) \label{ax:tm-no-head} \displaybreak[0] \\
  I_{n} &\sqsubseteq A_{\Box} \label{ax:tm-blank} \\
  I_{n} &\sqsubseteq \forall h.I_{n} \label{ax:tm-right}
\end{align}
The object~$o$ indicates the starting point of the tape, and the unary predicates~$I_j$ identify the first~$n$ cells, which are connected via the binary predicate~$h$. The predicates~$A_{w_j}$ indicate the presence of the input symbols in these cells. The predicates~$H_{q_0}$ and~$\overline{H_r}$ indicate the presence and absence of the head, respectively.
Finally, all cells to the right of the input word are labeled with a blank symbol~$\Box$ by using the auxiliary predicate~$I_{n}$.
We leave the axioms~\eqref{ax:tm-no-head}--\eqref{ax:tm-right} in the TBox, but replace \eqref{ax:tm-zero}--\eqref{ax:tm-j} by the following axioms \eqref{ax:thm-j-new}--\eqref{ax:tm-right-new} and assertions for~$I_w$ \eqref{ax:tm-h-zero}--\eqref{ax:tm-j-new-}:
\begin{align}
  S_{j,a} &\sqsubseteq \forall h^j.I_j\sqcap A_a & (0\le j<n,\ a\in\Sigma) \label{ax:thm-j-new} \\
  I_{n-1} &\sqsubseteq \forall h_n.I_{n} \label{ax:tm-right-new} \displaybreak[0] \\
  & H_{q_0}(o) \label{ax:tm-h-zero} \\
  & S_{j,w_j}(o) & (0\le j<n) \label{ax:tm-j-new-}
\end{align}
Here, $\forall h^j$ stands for $j$ nested restrictions of the form $\forall h$, and $S_{j,a}$ indicates the presence of the symbol~$a$ of the TM alphabet~$\Sigma$ at cell~$j$.
In this way, the final TBox~$\Tmc_n$ only depends on the length~$n$ of the input word~$w$, but not on~$w$ itself. The final domain description is $\Delta_n=(\Pmc_n,\Amc_n,\Tmc_n)$, where $\Pmc_n$ contains all symbols from~$\Tmc_n$ as well as~$g$, and $\Amc_n$ consists of the single action described above.

\section{Proof of Theorem~\ref{thm:sh-alci-no-polynomial-compilation}}
\label{sec:app-sh-alci-sroiq-no-polynomial-compilation}

The proof follows the same arguments as for Theorem~\ref{thm:horn-sroiq-no-polynomial-compilation}, the only difference being how the domain description~$\Delta_n$ and state~$I_w$ are obtained.
For CQ entailment in \ALCI, we adapt a reduction from a (universal) \AExpSpace Turing machine \cite{Lutz-DL07}.
The single action we use in the reduction will have a precondition $[q_w]$, where~$q_w$ is the CQ from that reduction, which, despite its name, does not depend on the input word $w=w_0\dots w_{n-1}$, but only on the length~$n$.
Again, the only adaption we have to do is to extract the part of the TBox encoding the input word into a state~$I_w$.
Consider the relevant axioms (again with slightly adapted notation), where $0\le j<n$ \cite{Lutz-DL07}:
\begin{align}
  & \big(R\sqcap I\big)(o) \label{ax:r-and-i} \\
  I &\sqsubseteq \forall s^{n+2}.\big((G_h\sqcap(\pos=j))\to w_j\big) \label{ax:i-gh-wj} \displaybreak[0] \\
  I &\sqsubseteq \forall s^{n+2}.\big((G_h\sqcap(\pos=0))\to q_0\big) \\
  I &\sqsubseteq \forall s^{n+2}.\big((G_h\sqcap(\pos\ge n))\to b\big)
\end{align}
Here, $R\sqcap I$ describes the starting point of the initial configuration and its $s^{n+2}$-successors marked with~$G_h$ identify the exponentially many tape cells. The counter~\pos identifies particular cells, $w_j$ describes the tape content, $q_0$ the initial state, and $b$ the blank symbol.
We use a similar trick as before to split~\eqref{ax:i-gh-wj} into ABox facts and TBox axioms that do not depend on the input word~$w$, but only on its length (for all $0\le j<n$, $a\in\Sigma$):
\begin{align}
  S_{j,a} &\sqsubseteq\forall s^{n+2}.\big((G_h\sqcap(\pos=j))\to a\big) \\
  & S_{j,w_j}(o) \label{ax:wj}
\end{align}
The state~$I_w$ now consists of all facts~\eqref{ax:wj} as well as~\eqref{ax:r-and-i}, and all other axioms are part of~$\Tmc_n$.
The remaining arguments are the same as in the proof of Theorem~\ref{thm:horn-sroiq-no-polynomial-compilation}.

The reduction for CQ entailment over \SH TBoxes \cite{ELOS-INFSYS09} is very similar to the previous case, except that the predicates~$R$ and~$G_h$ are not used, $s^{n+2}$ is replaced by $r^{n+1}$, $w_j$ is replaced by $\forall r.(E_h\to w_j)$, and similarly for~$q_0$ and~$b$ (many other details not relevant here are different as well).
Hence, we can use very similar adaptations.

\section{Benchmark Description}
\label{sec:app-benchmarks}
Our collection of benchmarks consists of a total of $235$ instances adapted from the publicly available
%\footnote{\url{https://gitlab.perspicuous-computing.science/m.steinmetz/pddl-dllite-benchmarks.git}}
DL-Lite eKAB benchmark collection~\cite{BHKS-KR21} as well as newly developed high expressivity domains. The benchmarks and the compiler are available in the supplementary material. Each problem instance has two representations: the Horn-\SHIQ eKAB task encoding with an ontology written in Turtle\footnote{\url{https://www.w3.org/TR/turtle/}} and its compilation into PDDL.
%online\footnote{\url{https://gitlab.perspicuous-computing.science/a.kovtunova/pddl-horndl.git}}.
 
%($56$ robotConj
%20cats
%24 drones
%20 elevator
%30 queens
%56 robot
%20 task
%15 tpsa
%15 vta
% 15vta-roles

\paragraph{Adapted DL-Lite eKAB benchmarks:}
We translated the original benchmarks into equivalent representations in our Horn-\SHIQ eKAB task encoding.
Detailed descriptions of these domains are available online. 
In short,
\begin{itemize}
\item 
in {\robot}~\cite{CMPS-IJCAI16}, a robot is positioned on a grid without knowing its position and the goal is to reach a target cell. The ontology describes relations between rows and columns.
% , \eg $\textsf{BelowOf6} \sqsubseteq \textsf{BelowOf7}$ and $\textsf{AboveOf1} \sqsubseteq\neg \textsf{BelowOf1}$.
% However, according to the original proposal, intersections of two concepts expressing the 2D-grid cell adjacency are delegated to the action descriptions, since {DL-Lite} does not support Horn clauses. 

\item The goal of {\taskassignment}, inspired by~\cite{CMPS-IJCAI16}, is to hire two electronic engineers for a company, while the ontology describes relations between different job positions. 

\item The \elevator and \cats benchmarks are inspired by standard planning benchmarks. In the {\cats} domain, there is a set of packages that contain either cats or bombs and the task is to disarm all bombs. An elevator in the {\elevator} benchmark can move up and down between floors to serve passengers according to their origins and destinations.

\item Both the {\vta} and {\tpsa} benchmarks are adaptations from older work on semantic web-service composition~\cite{hoffmann:etal:icwe-08}. {\vtaroles} is a more complex variant of \vta.
\end{itemize}

\paragraph{High Expressivity Domains:} 

% Since Horn-\SHIQ is strictly more expressive than DL-Lite, we have created $3$ new domains,
% in which we make use of this expressivity: \drones, \queens, and \robotconj. Namely,
\begin{itemize}
\item \drones models a complex 2D drone navigation problem, in which drones need to be
moved while avoiding certain situations; the latter is given by ontology
reasoning, involving Horn concept inclusions with qualified existential
restrictions occurring negatively and symmetric roles. Grid cells are occupied with different objects like $\textsf{Human}$s or $\textsf{Tree}$s or weather conditions like $\textsf{LowVisibility}$ or $\textsf{Rain}$. There is a set of $\textsf{Drone}$s randomly placed on the board. Depending on the distances to other objects, a $\textsf{Drone}$ can enter a critical state (defined by the ontology).
The goal is to move the drones such that no two drones in a critical state are next to each other. In the benchmark, instances vary in the board size and the number of drones.
We have chosen the instances such that some of them remain hard for the planner to solve. The compilation itself is always very fast.
\item  \queens generalizes the eight queens
puzzle from chess to variable numbers of board sizes, $n\in\{5,\dots, 10\}$, and queens,
$m\in\{n-4,\dots, n\}$. In the initial state, queens are placed randomly and the ontology contains a symmetric, transitive role to describe which queen movements are legal.  Similarly to \drones, the planner must find a sequence of legal moves such that no two queens threaten each other.
\item   \robotconj is a redesign of \robot,
moving complexity from action descriptions into the ontology. The original
\robot benchmark encoded static knowledge about 2D-grid cell adjacency in the
action descriptions, which via the use of Horn clauses can be encoded much more
naturally directly in the ontology.
More precisely, in a slightly simplified notation, the action \textsf{MoveDown} contains the two redundant conditional effects
\begin{align}
\textsf{	(when (and (AboveOf1 ?x) (BelowOf2 ?x))\,} \label{eq:robot:non-strict}\\
\textsf{	(Row0 ?x))},\nonumber
\end{align}
which one can read as \enquote{if the robot is above or in row 1 and below row 2, then move the robot to row 0}, and
\begin{align}
\textsf{	(when (Row1 ?x))\,} \label{eq:robot:strict} \\
\textsf{	(Row0 ?x))},\nonumber
\end{align}
\ie \enquote{if the robot position is in row 1, then move the robot to row 0}. However, encoding the static knowledge that \textsf{AboveOf1} and \textsf{BelowOf2} imply \textsf{Row1} is beyond DL-Lite.
Moreover, the actions \textsf{MoveUp}, \textsf{MoveLeft}, and \textsf{MoveRight} have similar redundant effects.
% Even more, the quantity of the redundant part in the action description grows four times with benchmark scaling, since there are four actions, \textsf{MoveDown}, \textsf{MoveUp}, \textsf{MoveLeft}, \textsf{MoveRight} and every action has conditional effects for each row/column in a non-strict (\eg \eqref{eq:robot:non-strict}) and a strict (\eg \eqref{eq:robot:strict}) notations.   
%
For \robotconj we have simplified these descriptions by using axioms like 
$
\textsf{AboveOf1} \sqcap \textsf{BelowOf2} \sqsubseteq \textsf{Row1}
$, which allows us to get rid of~\eqref{eq:robot:non-strict}. 
\end{itemize}

\fi

\end{document}